\renewcommand*{\backrefalt}[4]{%
    \ifcase #1 \footnotesize{(Not cited.)}%
    \or        \footnotesize{(Cited on page~#2)}%
    \else      \footnotesize{(Cited on pages~#2)}%
    \fi}
\theoremstyle{plain}
\newtheorem{theorem}{Theorem}[section]
\newtheorem{lemma}[theorem]{Lemma}
\theoremstyle{definition}
\newtheorem{assumption}[theorem]{Assumption}
\theoremstyle{remark}
\DeclarePairedDelimiter{\abs}{\lvert}{\rvert} %
\DeclarePairedDelimiter{\brk}{[}{]}
\DeclarePairedDelimiter{\crl}{\{}{\}}
\DeclarePairedDelimiter{\prn}{(}{)}
\DeclarePairedDelimiter{\nrm}{\|}{\|}
\newcommand{\inner}[2]{\left\langle #1,\, #2 \right\rangle}
\DeclareMathOperator*{\argmin}{arg\,min}
\newcommand{\collapse}[1]{$\dots$}
\newcommand{\T}{^\top}
\newcommand{\mc}[1]{\mathcal{#1}}
\newcommand{\R}{{\mathbb{R}}}
\newcommand{\E}{{\mathbb E}}
\newcommand{\PP}[0]{\mathbf{P}}
\newcommand{\w}[0]{\mathbf{w}}
\newcommand{\f}[0]{L}
\newcommand{\g}[0]{\mathbf{g}}
\newcommand{\F}[0]{\hat{L}}
\newcommand{\h}[0]{h}
\newcommand{\uu}[0]{\mathbf{u}}
\newcommand{\vv}[0]{\mathbf{v}}
\renewcommand{\phi}{\varphi}
\newcommand{\bT}[0]{\mathsf{Telescope}}
\newcommand{\bN}[0]{\mathsf{Noise}}
\newcommand{\bI}[0]{\mathsf{Inexact}}
\newcommand{\bC}[0]{\mathsf{Cancel}}
\newcommand{\vast}{\bBigg@{4}}
\icmltitlerunning{Two Losses Are Better Than One: Faster Optimization Using a Cheaper Proxy}
\begin{document}
\twocolumn[
\icmltitle{Two Losses Are Better Than One: Faster Optimization Using a Cheaper Proxy}

% List of affiliations: The first argument should be a (short)
% identifier you will use later to specify author affiliations
% Academic affiliations should list Department, University, City, Region, Country
% Industry affiliations should list Company, City, Region, Country

\icmlsetsymbol{equal}{*}

\begin{icmlauthorlist}
\icmlauthor{Blake Woodworth}{yyy}
\icmlauthor{Konstantin Mishchenko}{comp}
\icmlauthor{Francis Bach}{yyy}
\end{icmlauthorlist}

\icmlaffiliation{yyy}{Inria, Ecole Normale Sup{\'e}rieure, PSL Research University, Paris France}
\icmlaffiliation{comp}{Samsung AI Center, Cambridge, UK. Work done while at CNRS, Ecole Normale Sup{\'e}rieure, Inria}

\icmlcorrespondingauthor{Blake Woodworth}{blakewoodworth@gmail.com}
\icmlcorrespondingauthor{Konstantin Mishchenko}{konsta.mish@gmail.com}
\icmlcorrespondingauthor{Francis Bach}{francis.bach@inria.fr}

% You may provide any keywords that you
% find helpful for describing your paper; these are used to populate
% the "keywords" metadata in the PDF but will not be shown in the document
\icmlkeywords{}

\vskip 0.3in
]
\printAffiliationsAndNotice{}  
% leave blank if no need to mention equal contribution
% \printAffiliationsAndNotice{\icmlEqualContribution}

\begin{abstract}
We present an algorithm for minimizing an objective with hard-to-compute gradients by using a related, easier-to-access function as a proxy. Our algorithm is based on approximate proximal-point iterations on the proxy combined with relatively few stochastic gradients from the objective. When the difference between the objective and the proxy is $\delta$-smooth, our algorithm guarantees convergence at a rate matching stochastic gradient descent on a $\delta$-smooth objective, which can lead to substantially better sample efficiency. Our algorithm has many potential applications in machine learning, and provides a principled means of leveraging synthetic data, physics simulators, mixed public and private data, and more.
\end{abstract}

\section{Introduction}

In many machine learning problems, it is difficult to access the objective function, e.g., to compute its gradients for use in training. This difficulty can come from a wide range of sources. For example, it might be costly or time consuming to collect labelled training examples, leading to a small training set. In some applications, computing the gradient of the loss might require waiting for an actual robot to execute a policy in a physical environment, which could be very slow. Moreover, physical execution can risk damage to the robot or its surroundings. Finally, individuals' data may be subject to privacy or legal constraints, limiting access for  gradient computations. In all of these cases, we face challenges that complicate training, and can lead to poor performance using off-the-shelf algorithms like stochastic gradient descent (\textsc{SGD}).

A natural solution to the problems above is to find a second, easier-to-access ``proxy'' function that can be used in place of the real objective. When few samples are available, synthetic data could be used to approximate the objective, or in applications with sensitive data, we could train a generative model using a privacy-preserving algorithm and define the proxy using synthetic samples. To mitigate the cost and risk of operating a physical robot, one could use a simulator that is faster, cheaper, and safer.  

Unfortunately, directly optimizing a hand-crafted proxy objective might not improve performance on the original problem, because the surrogate loss can introduce a bias. We aim to remedy this in our work, and our main contribution is a general algorithm that exploits curvature information from the proxy loss in order to better optimize the objective that we care about.

\subsection{The problem}
Our ultimate aim is to solve unconstrained convex optimization problems of the form
\begin{equation}\label{eq:the-objective}
\min\nolimits_{\w \in \R^d}\f(\w)\,.
\end{equation}
Here, $\f$ could represent the population loss of a machine learning model parametrized by weights $\w$; it could also be the training loss, or any other objective function to be minimized. To solve \eqref{eq:the-objective}, we use a proxy function $\F$ and stochastic gradients $\g_k$ approximating $\nabla \f(\w_k)$ and update
% \begin{equation}\label{eq:update}
% \w_{k+1} \approx \argmin_{\w} \inner{\g_k}{\w} + D_{\F}(\w;\w_k) + \frac{1}{2\eta}\nrm{\w - \w_k}^2
% \end{equation}
\begin{align}\label{eq:update}
\w_{k+1} \approx \argmin_{\w} \Bigl\{\underbrace{\hat L(\w)}_{\textrm{Proxy loss}} +  &\underbrace{\langle\g_k - \nabla \hat L(\w_k), \w - \w_k \rangle}_{\textrm{Bias correction}} \notag \\
&\quad + \underbrace{\frac{1}{2\eta}\nrm{\w - \w_k}^2}_{\textrm{Regularization}}\Bigr\}.
\end{align}
% with $D_{\F}$ denotes the Bregman divergence with potential $\F$ (see \eqref{eq:def-bregman-divergence}). 
Thus, each iteration uses just a single stochastic gradient from $\f$, but requires a heavier computation based on $\F$. If we take $\F \equiv 0$ %(so $D_{\F} \equiv 0$ too)
, we note that the update \eqref{eq:update} is equivalent to one stochastic gradient descent step
\begin{equation}\label{eq:sgd-update}
    \w_{k+1} = \w_k - \eta \g_k\,.
\end{equation}
Therefore, \textsc{SGD} is a natural point of comparison for our algorithm, and the main question is to what extent updating using \eqref{eq:update} (with non-zero $\F$) rather than \eqref{eq:sgd-update} is worth the additional computational cost. We will show that when $\F$ is sufficiently ``similar'' to $\f$, our algorithm's updates, \eqref{eq:update}, can converge to the minimum of $\f$ in substantially fewer iterations than would be needed for \textsc{SGD}, \eqref{eq:sgd-update}. Accordingly, the number of stochastic gradients from $\f$ that our algorithm needs is less than what would be needed by \textsc{SGD}.

% The main idea of our algorithm is to reduce the quantity of evaluations or stochastic estimates of $\f(\w)$ or $\nabla \f(\w)$ by instead exploiting a surrogate function, $\F(\w)$, which is more accessible than $\f$. More specifically, each iteration of our method (see Algorithm \ref{alg:ISMD}), requires just one stochastic estimate of $\nabla \f$ and then makes a proximal-gradient-like update using $\F$. Provided that $\F$ is sufficiently ``similar'' to $\f$, we show that our algorithm can guarantee a substantially faster rate of convergence than could be achieved by directly optimizing $\f$ using, e.g., SGD with the same number of stochastic gradients, although this comes at the cost of computing proximal-point updates based on $\F$.

For this to work, we obviously require that $\F$ somehow resembles $\f$, and the particular notion of similarity that we consider is that the function $h := \f - \F$ is differentiable and has $\delta$-Lipschitz continuous gradients, which, in the case that $\f$ and $\F$ are twice-differentiable, is equivalent to requiring that $\forall_{\w}\ \nrm{\nabla^2 \f(\w) - \nabla^2 \F(\w)}_{\textrm{op}} \leq \delta$. This measure, sometimes referred to as ``$\delta$-Hessian similarity'', is common in the optimization literature \citep[see, e.g.,][]{mairal2013optimization, mairal2015incremental,arjevani2015communication,kovalev2022optimal,chayti2022optimization}. For certain problems, it can be easy to construct a proxy that satisfies this similarity condition for small~$\delta$. For instance, the Hessian of a least-squares objective is simply the feature covariance matrix, so an appropriate proxy can be defined using any estimate of the covariance matrix that is $\delta$-accurate in the operator norm. In Section \ref{sec:binary-logistic-regression} we further show that for least squares and logistic regression, proxies with $\delta=0$ can be constructed \emph{without using any labels}.

\subsection{Our contributions}
We present Algorithm \ref{alg:ISMD}, which substitutes cheaper accesses to a proxy loss in place of costly accesses to the objective itself. In Section \ref{sec:convergence-analysis}, we prove convergence guarantees for our algorithm in the convex and strongly convex settings, and in Section \ref{sec:comparison-with-direct-minimization}, we show that these guarantees imply statistical optimality as well as a dependence on the similarity parameter, $\delta$, in the place of other properties of~$\f$ itself, such as its potentially much larger smoothness parameter. On a technical level, our analysis improves over many similar methods by allowing each iteration's proximal-point subproblem to be solved inexactly, with the inexactness captured by a simple criterion that can be evaluated at the time of execution. Furthermore, in Section \ref{sec:making-the-gradient-small} we show that this criterion can be satisfied efficiently owing to the strong convexity of the proximal-point subproblem's objective. In Section \ref{sec:applications}, we discuss potential applications of our algorithm where the target objective is costly to access and a suitable proxy is available. In Section \ref{sec:experiments}, we conduct experiments to show the efficacy of our algorithm in realistic problems, including one with a non-convex objective. Finally, in Section \ref{sec:conclusion}, we discuss several possible extensions of our work and we provide a preliminary analysis of our algorithm for the non-convex setting.

\subsection{Background and related work}\label{sec:related-work}

Using a more tractable surrogate in the place of the actual target loss is a very old idea in statistics and machine learning, and some type of surrogate is used in almost any application. For instance, convex loss functions like the hinge or logistic loss are often used as surrogates for the discontinuous 0-1 classification loss. This avoids the computational intractability of minimizing the 0-1 loss \citep[see, e.g.,][]{arora1997hardness}, but note that this does not per se mitigate other difficulties with accessing the objective in the sense that we have discussed, e.g., limited access to training samples. 
Another classic example is empirical risk minimization \citep{vapnik1968uniform} where the training loss is used as a surrogate for the real objective, the population loss. This more closely aligns with our motivation  since it  replaces the hard-to-evaluate population loss---which is defined by an unknown data distribution---with the training loss, which we can directly evaluate, compute gradients of, etc. 

However, both of these examples rely on i.i.d.~samples from the target distribution while our approach can exploit additional ``side-information''---synthetic data, simulators, etc.---to complement a smaller collection of i.i.d.~samples. Moreover, the way in which surrogates are classically used differs from our approach: typically, the surrogate is directly minimized and the result is taken as an estimate of the minimum of the target objective, whereas we use the proxy to facilitate optimization using stochastic gradients from the objective itself. As such, the classic approach relies upon minima of the surrogate approximating minima of the target, while our method only requires the proxy and objective to have similar second derivatives. This is not \emph{necessarily} easier to achieve, but it is easier in certain cases (see, e.g., Section \ref{sec:binary-logistic-regression}), and it is much more amenable to applications involving synthetic data, simulators, etc., where the proxy loss could have different minima due to idiosyncrasies of the synthetic data or simulator.

In more closely related work, \citet{hendrikx2020statistically} propose an algorithm, SPAG, for a strongly convex setting, which builds upon a previous method, DANE \citep{shamir2014communication}, and which resembles an accelerated version of Algorithm \ref{alg:ISMD}. Their guarantees are analogous to ours up to acceleration,  however both SPAG and DANE require exact proximal-point updates, while our algorithm and analysis allow these updates to be inexact. Motivated by the problem of minimizing over a subset of parameters, \citet{parpas2017multilevel} assumed that the surrogate subproblem is defined over a different space, with projection into that space and back only introducing a multiplicative error. In other related work, \citet{chayti2022optimization} propose several algorithms that use a surrogate for non-convex optimization, one of which resembles a momentum variant of Algorithm \ref{alg:ISMD}. %However, once a mistake\footnote{See Appendix \ref{app:error-discussion} for an explanation.} in their proof is corrected, their theoretical guarantees do not improve with $\delta$. 

\citet{mairal2013optimization, mairal2015incremental} studies the ``majorization-minimization'' meta-algorithm, where an upper bound on the objective is minimized in each iteration. Our method can be viewed as an instance of majorization-minimization with the surrogate being used to define the upper bound on the loss. \citeauthor{mairal2013optimization} analyzes this method under the same $\delta$-similarity condition that we use and prove similar convergence rates, but the key difference between this work and ours is that \citeauthor{mairal2013optimization}'s methods require a guaranteed upper bound on the objective---or on components of the objective in the case of finite sum structured problems---and require that this upper bound be exactly minimized in each iteration, whereas we only require an in-expectation upper bound and allow for only approximate minimization. In this way, our setting is more appropriate for the stochastic optimization problems that arise in machine learning.

Finally, in the most closely related work, \citet{mairal2013stochastic} study a stochastic majorization-minimization approach which is applicable to the stochastic optimization problems we are interested in. However, \citeauthor{mairal2013stochastic}'s approach requires a guaranteed upper bound of the loss function evaluated on each sample and requires that the $\delta$-similarity bound hold almost surely for each sample. In contrast, we only require an in-expectation upper bound and that the $\delta$-similarity hold in expectation, making it easier to find a suitable surrogate. Furthermore, our convergence guarantees have a better scaling with the number of iterations, $K$.

\section{The algorithm and its analysis}
We now present our algorithm and prove convergence guarantees in the convex and strongly convex settings. We will also argue that our method can provide substantially better guarantees than \textsc{SGD} when $\F$ is sufficiently similar to $\f$.

\begin{algorithm}[t]
\caption{\textsc{ProxyProx} \label{alg:ISMD}}
\begin{algorithmic}[1]
\STATE \textbf{Input:} initialization $\w_0\in\R^d$, stepsize $\eta>0$
\FOR{$k=0,1,\dots,K-1$} 
\STATE Sample $\g_k$ such that $\E[\g_k\mid \w_k]=\nabla L(\w_k)$
\STATE Set $\w_{k+1} \approx \argmin_\w \phi_k(\w)$ where
\[
\phi_k(\w) := \inner{\g_k}{\w} + D_{\F}(\w;\w_k) + \smash{\frac{1}{2\eta}}\nrm{\w-\w_k}^2.
\]
\ENDFOR
\end{algorithmic}
\end{algorithm}

\subsection{Setting and notation}\label{sec:setting-and-notation}
Throughout this paper, we will use $\nrm{\uu}$ to denote the Euclidean norm of the vector $\uu$. We will assume the minimum value of $\f$, which we denote $\f^*$, is realized, with $\w^*$ denoting an arbitrary minimizer. We use $B^2 := \E\nrm{\w_0 - \w^*}^2$ as shorthand to denote the distance between the (possibly random) initialization, $\w_0$, and this minimizer.

A function $f$ is $\mu$-strongly convex if for all $\uu,\vv$
\begin{equation}\label{eq:def-strong-convexity}
f(\vv) \geq f(\uu) + \inner{\nabla f(\uu)}{\vv-\uu} + \frac{\mu}{2}\nrm{\uu - \vv}^2,
\end{equation}
where, in the event that $f$ is not differentiable, $\nabla f(\uu)$ denotes here an arbitrary subgradient of $f$. When this holds for $\mu = 0$, $f$ is merely convex. A function $f$ is $H$-smooth if it is differentiable and $\nabla f$ is $H$-Lipschitz continuous or, equivalently, for all $\uu,\vv$
\begin{equation}\label{eq:smoothness-inequality}
\abs*{f(\vv) - f(\uu) - \inner{\nabla f(\uu)}{\vv-\uu}} 
\leq \frac{H}{2}\nrm{\uu - \vv}^2.
\end{equation}
We define the Bregman divergence with potential $\psi$ as
\begin{equation}\label{eq:def-bregman-divergence}
D_{\psi}(\uu;\vv) := \psi(\uu) - \psi(\vv) - \inner{\nabla \psi(\vv)}{\uu - \vv}.
\end{equation}
While Bregman divergences are typically defined only for strictly convex potentials $\psi$, in this paper we allow $\psi$ to be any differentiable function with $D_\psi$ simply defined as in \eqref{eq:def-bregman-divergence}. The key property of Bregman divergences for our purposes is the three-point identity:\footnote{This does not require convexity \citep[][Lemma 3.1]{chen1993convergence}.} for all $\uu,\vv,\w$
\begin{multline}\label{eq:def-three-point-identity}
D_\psi(\uu;\vv) - D_{\psi}(\uu;\w) - D_{\psi}(\w;\vv) \\
= \inner{\nabla \psi(\vv) - \nabla \psi(\w)}{\w-\uu}.
\end{multline}
% On an intuitive level, the Bregman divergence $D_{\psi}(\uu;\vv)$ measures the inaccuracy at $\uu$ of the first-order approximation of $\psi$ around $\vv$. 

Our algorithm relies upon two main assumptions; one which captures the similarity between the objective $\f$ and the proxy $\F$, and one which captures the level of noise in the stochastic gradients $\g_k$. First, we assume
\begin{assumption}\label{ass:similarity}
The function $\h := \f - \F$ is differentiable and $\nabla \h$ is $\delta$-Lipschitz continuous.
\end{assumption}
That is, we require that our proxy $\F$ has similar curvature to the objective $\f$. In light of \eqref{eq:smoothness-inequality}, Assumption \ref{ass:similarity} implies that 
\begin{equation*}
\abs*{D_{\h}(\uu;\vv)} = \abs*{D_{\f}(\uu;\vv) - D_{\F}(\uu;\vv)} \leq \frac{\delta}{2}\nrm{\uu - \vv}^2.
\end{equation*}
This justifies the definition of $\phi_k$ in Algorithm \ref{alg:ISMD} because, when $\delta$ is small
\begin{equation*}
\phi_k(\w) \approx \inner{\g_k}{\w} + D_{\f}(\w;\w_k) + \smash{\frac{1}{2\eta}}\nrm{\w-\w_k}^2,
\end{equation*}
which corresponds to (inexact, stochastic) proximal-point iterations directly on the objective $\f$, which is well-known to have good performance \citep[see, e.g.,][]{barre2022principled}.

We note that Assumption \ref{ass:similarity} can accommodate non-differentiable objectives, e.g., if $\f(\w) = \ell(\w) + \psi(\w)$ and $\F(\w) = \hat{\ell}(\w) + \psi(\w)$ with $\ell$ and $\hat{\ell}$ differentiable, because $\psi$ does not appear in $\f - \F$. In this way, our algorithm is compatible, e.g., with non-smooth regularizers such as an $L_1$ penalty as long as this is incorporated into the proxy.

% We note that in the case that $\f$ and $\F$ are twice-differentiable, this is equivalent to requiring that $\nrm{\nabla^2 \f(\w) - \nabla^2 \F(\w)}_{\textrm{op}} \leq \delta$. Furthermore, it is easy to see that if $\f$ and $\F$ are $H$-Lipschitz, then Assumption \ref{ass:similarity} will hold with $\delta = 2H$, but as we will discuss in Section \ref{sec:todo}, the advantage of our algorithm mainly comes when Assumption \ref{ass:similarity} holds with $\delta \ll H$. Depending on the objective $\f$, it may or may not be easy to construct a model for which this holds, but in Section \ref{sec:todo} we will discuss several examples where it is possible.

In addition to Assumption \ref{ass:similarity}, we require the following:
\begin{assumption}\label{ass:noise}
For each $k$, $\E\brk*{\g_k\,\middle|\,\w_k} = \nabla \f(\w_k)$ and $\E\brk*{\nrm{\g_k - \nabla \f(\w_k)}^2\,\middle|\,\w_k} \leq \sigma^2$. If $\f$ is not differentiable but convex, this holds for some subgradient in $\partial\f(\w_k)$.
\end{assumption}
The assumption that the stochastic gradients $\g_k$ are unbiased and have bounded variance is very common in the optimization literature \citep{nemirovskyyudin1983,dekel2012optimal,bubeck2015convex}.

%%% BLAKE: I think I made the in-column equation above look nicer, so we may not need this.
% \begin{table*}
% % \begin{strip}
% \begin{align}\label{eq:test}
%     \f(\w_{k+1}) - \f^*
%     &= \underbrace{\frac{1}{2\eta}\nrm*{\w_k - \w^*}^2 - D_{\h}(\w^*;\w_k) - \frac{1}{2\eta}\nrm*{\w_{k+1} - \w^*}^2 + D_{\h}(\w^*;\w_{k+1})}_{\textrm{telescoping terms}} \\
%     &\qquad + \underbrace{\inner{\nabla \f(\w_k) - \g_k}{\w_{k+1} - \w^*}}_{\textrm{noise term}} + \underbrace{\inner{\nabla \phi_k(\w_{k+1})}{\w_{k+1} - \w^*}}_{\textrm{inexactness error}} \nonumber \\
%     &\qquad + \underbrace{D_{\h}(\w_{k+1};\w_k) - \frac{1}{2\eta}\nrm*{\w_{k+1} - \w_k}^2- D_{\f}(\w^*;\w_{k+1})}_{\le 0,\ \textrm{similarity error}}\nonumber
% \end{align}
% % \end{strip}
% \end{table*}

\subsection{Convergence guarantees and proof sketch}\label{sec:convergence-analysis}

We begin with our main result:
\begin{restatable}{theorem}{stronglyconvexthm}\label{thm:strongly-convex}
Under Assumptions \ref{ass:similarity} and \ref{ass:noise}, let $\f$ be $\mu$-strongly convex and let $\eta \leq \frac{1}{4\delta}$. If for each $k$
\[
\E\nrm{\nabla \phi_k(\w_{k+1})}^2 
\leq
\frac{\mu}{4\eta}\E\nrm{\w_{k+1} - \w_k}^2 + G^2,
\]
for some $G$, then a geometrically weighted average of the iterates of Algorithm \ref{alg:ISMD}, $\bar{\w}_K$ (see \eqref{eq:weighted-iterate}), satisfies
\[
\E \f(\bar{\w}_K) - \f^* \leq  \frac{5B^2}{8\eta} \Bigl( 1 + \frac{2\eta\mu}{5}\Bigr)^{1-K}
+ 2\eta\sigma^2 + \frac{G^2}{\mu},
\]
where the expectation is taken over the randomness in both the stochastic gradients and the selection of the iterates. 
\end{restatable}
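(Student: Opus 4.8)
The plan is to track the expected squared distance $\E\nrm{\w_{k+1}-\w^*}^2$ across one update and to fold the proxy-induced error into a Lyapunov potential. Writing $r_k := \nabla\phi_k(\w_{k+1})$ for the residual of the inexact subproblem solve, the definition of $\phi_k$ gives $\w_{k+1}-\w_k = \eta(r_k - d_k)$ with $d_k := \g_k + \nabla\F(\w_{k+1}) - \nabla\F(\w_k)$. First I expand the algebraic identity $\nrm{\w_{k+1}-\w^*}^2 = \nrm{\w_k-\w^*}^2 - \nrm{\w_{k+1}-\w_k}^2 + 2\inner{\w_{k+1}-\w^*}{\w_{k+1}-\w_k}$, substitute the update, and use the exact decomposition $d_k = \nabla\f(\w_{k+1}) + \xi_k + \Delta_k$, where $\xi_k := \g_k - \nabla\f(\w_k)$ is the gradient noise and $\Delta_k := \nabla\h(\w_k) - \nabla\h(\w_{k+1})$ is the similarity error. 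This isolates four terms to control: a true-gradient term in $\nabla\f(\w_{k+1})$, a residual term in $r_k$, a noise term in $\xi_k$, and a similarity term in $\Delta_k$.

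Each term is then handled in turn. For the true-gradient term I use $\mu$-strong convexity of $\f$ to get $-2\eta\inner{\nabla\f(\w_{k+1})}{\w_{k+1}-\w^*} \le -2\eta(\f(\w_{k+1})-\f^*) - \eta\mu\nrm{\w_{k+1}-\w^*}^2$. The crucial step is the similarity term: rather than bounding it by Cauchy--Schwarz, I apply the three-point identity \eqref{eq:def-three-point-identity} with potential $\h$ to write $-2\eta\inner{\w_{k+1}-\w^*}{\Delta_k} = -2\eta D_{\h}(\w^*;\w_k) + 2\eta D_{\h}(\w^*;\w_{k+1}) + 2\eta D_{\h}(\w_{k+1};\w_k)$. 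The first two pieces telescope once I define the potential $V_k := \nrm{\w_k-\w^*}^2 - 2\eta D_{\h}(\w^*;\w_k)$, while the last is bounded by $\eta\delta\nrm{\w_{k+1}-\w_k}^2 \le \tfrac14\nrm{\w_{k+1}-\w_k}^2$ using Assumption \ref{ass:similarity} and $\eta\le\tfrac1{4\delta}$. For the residual I use Young's inequality with parameter $\mu/2$, so $2\eta\inner{\w_{k+1}-\w^*}{r_k} \le \tfrac{\eta\mu}{2}\nrm{\w_{k+1}-\w^*}^2 + \tfrac{2\eta}{\mu}\nrm{r_k}^2$, and invoke the inexactness hypothesis in expectation to replace $\tfrac{2\eta}{\mu}\E\nrm{r_k}^2$ by $\tfrac12\E\nrm{\w_{k+1}-\w_k}^2 + \tfrac{2\eta}{\mu}G^2$. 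For the noise I split $\w_{k+1}-\w^* = (\w_k-\w^*) + (\w_{k+1}-\w_k)$: the first inner product vanishes in conditional expectation by Assumption \ref{ass:noise}, and the second is controlled by Young's inequality with parameter $\tfrac1{4\eta}$, contributing $\tfrac14\E\nrm{\w_{k+1}-\w_k}^2 + 4\eta^2\sigma^2$. By design these constants make the $\nrm{\w_{k+1}-\w_k}^2$ terms cancel exactly, since $-1 + \tfrac14 + \tfrac12 + \tfrac14 = 0$.

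Collecting everything in expectation yields $\E V_{k+1} \le \E V_k - \tfrac{\eta\mu}{2}\E\nrm{\w_{k+1}-\w^*}^2 - 2\eta(\E\f(\w_{k+1})-\f^*) + \tfrac{2\eta}{\mu}G^2 + 4\eta^2\sigma^2$. To convert the strong-convexity term into a genuine contraction of the potential I use $\abs*{2\eta D_{\h}(\w^*;\w_{k+1})} \le \tfrac14\nrm{\w_{k+1}-\w^*}^2$, whence $\tfrac34\nrm{\w_{k+1}-\w^*}^2 \le V_{k+1} \le \tfrac54\nrm{\w_{k+1}-\w^*}^2$, so $-\tfrac{\eta\mu}{2}\nrm{\w_{k+1}-\w^*}^2 \le -\tfrac{2\eta\mu}{5}V_{k+1}$. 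This gives the clean one-step recursion $(1+\tfrac{2\eta\mu}{5})\E V_{k+1} \le \E V_k - 2\eta(\E\f(\w_{k+1})-\f^*) + \tfrac{2\eta}{\mu}G^2 + 4\eta^2\sigma^2$. Setting $\rho := 1 + \tfrac{2\eta\mu}{5}$, multiplying by $\rho^k$, summing over $k=0,\dots,K-1$, and dropping $\rho^K\E V_K\ge0$ yields $2\eta\sum_k \rho^k(\E\f(\w_{k+1})-\f^*) \le \E V_0 + (\tfrac{2\eta}{\mu}G^2 + 4\eta^2\sigma^2)\tfrac{\rho^K-1}{\rho-1}$. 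Taking $\bar{\w}_K := (\sum_{k=0}^{K-1}\rho^k\w_{k+1})/(\sum_{k=0}^{K-1}\rho^k)$ (the geometric average in the theorem statement), Jensen's inequality bounds $\E\f(\bar{\w}_K)-\f^*$ by the weighted average of $\E\f(\w_{k+1})-\f^*$; dividing by $2\eta\sum_k\rho^k = 2\eta\tfrac{\rho^K-1}{\rho-1}$, using the elementary bound $\tfrac{\rho-1}{\rho^K-1}\le\rho^{1-K}$ and $\E V_0 \le (1+\eta\delta)B^2 \le \tfrac54 B^2$, reproduces $\tfrac{5B^2}{8\eta}\rho^{1-K} + \tfrac{G^2}{\mu} + 2\eta\sigma^2$ exactly.

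The main obstacle is precisely the similarity term. A naive Cauchy--Schwarz bound on $\inner{\w_{k+1}-\w^*}{\Delta_k}$ unavoidably contributes at constant order to the coefficients of $\nrm{\w_k-\w^*}^2$ and $\nrm{\w_{k+1}-\w^*}^2$, which overwhelms the $\Theta(\eta\mu)$ gain from strong convexity and leaves no contraction at all; routing it instead through the three-point identity into the telescoping potential $V_k$, so that only the harmless $D_{\h}(\w_{k+1};\w_k)$ piece is charged against the $\nrm{\w_{k+1}-\w_k}^2$ budget, is what preserves the geometric rate. The remaining delicate point is the bookkeeping of the three Young parameters, which must be chosen ($\mu/2$ for the residual, $1/(4\eta)$ for the noise) so that the $\nrm{\w_{k+1}-\w_k}^2$ terms cancel and the residual/noise constants land on $G^2/\mu$ and $2\eta\sigma^2$ after normalization.
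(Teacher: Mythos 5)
Your proposal is correct and takes essentially the same route as the paper's own proof: your potential $V_k = \nrm{\w_k-\w^*}^2 - 2\eta D_{\h}(\w^*;\w_k)$ is exactly $2\eta$ times the paper's $\Delta_k$, and your key steps---routing the similarity error through the three-point identity into a telescoping potential, using unbiasedness to measure the noise against $\w_{k+1}-\w_k$, Young's inequality plus strong convexity to absorb the residual $\nabla\phi_k(\w_{k+1})$, the $\bigl[\tfrac34,\tfrac54\bigr]$ sandwich yielding the $\tfrac{2\eta\mu}{5}$ contraction, and the geometric weighting with Jensen's inequality---coincide with the paper's Lemma~\ref{lem:one-step-general} and the body of the theorem's proof, with matching constants throughout. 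The only cosmetic difference is that you start from the squared-distance expansion and substitute the update equation, whereas the paper starts from the Bregman identity $\f(\w_{k+1})-\f^* = \inner{\nabla\f(\w_{k+1})}{\w_{k+1}-\w^*} - D_{\f}(\w^*;\w_{k+1})$; these are algebraically equivalent derivations of the same one-step recursion.
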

A detailed proof can be found in Appendices \ref{app:proof-of-thmstronglyconvex} and \ref{app:proof-of-thmconvex}. 
% The core of our analysis---the details of which can be found in Appendices \ref{app:proof-of-thmstronglyconvex} and \ref{app:proof-of-thmconvex}---begins with the following identity:
% \begin{align}\label{eq:proof-outline-identity}
% &\f(\w_{k+1}) - \f^* \\
% &\begin{aligned}
% &= \frac{1}{2\eta}\nrm*{\w_k - \w^*}^2 - D_{\h}(\w^*;\w_k) \\
% &- \frac{1}{2\eta}\nrm*{\w_{k+1} - \w^*}^2 + D_{\h}(\w^*;\w_{k+1}) 
% \end{aligned}
% \tag*{{\vast\}} $(\bT)$\ } \nonumber\\
% &+ \inner{\nabla \f(\w_k) - \g_k}{\w_{k+1} - \w^*} \tag*{{\big\}}\;\;\; $(\bN)$\quad\;} \nonumber\\
% &+ \inner{\nabla \phi_k(\w_{k+1})}{\w_{k+1} - \w^*} \tag*{{\big\}}\; \ $(\bI)$\;\; } \nonumber\\
% &\begin{aligned}
% &+ D_{\h}(\w_{k+1};\w_k) - \frac{1}{2\eta}\nrm*{\w_{k+1} - \w_k}^2  \\
% & - D_{\f}(\w^*;\w_{k+1})\,.
% \end{aligned} \tag*{{\Bigg\}}\; $(\bC)$\quad} \nonumber
% \end{align}
We start with a straightforward algebraic manipulation of $\nabla \phi_k$. Recalling that $\h = \f - \F$, we write\footnote{Here, for simplicity, assume $\f$ and $\F$ are differentiable, but see Appendix \ref{app:proof-of-thmstronglyconvex} for the general case.}
\begin{multline}\label{eq:nabla-phi_k}
\nabla \phi_k(\w_{k+1}) 
% = \nabla \F(\w_{k+1}) - \nabla \F(\w_k) + \g_k + \frac{\w_{k+1}-\w_k}{\eta} \\
= -\nabla \h(\w_{k+1}) + \nabla \h(\w_k) + \nabla \f(\w_{k+1}) \\ + \g_k - \nabla \f(\w_k) + \frac{\w_{k+1}-\w_k}{\eta}.
\end{multline}
Next, \eqref{eq:def-bregman-divergence} implies
\begin{multline}\label{eq:bregman-identity-proof-sketch}
\f(\w_{k+1}) - \f^* \\
= \inner{\nabla \f(\w_{k+1})}{\w_{k+1} - \w^*} - D_{\f}(\w^*;\w_{k+1}).
\end{multline}
Rearranging \eqref{eq:nabla-phi_k} allows us to express $\nabla \f(\w_{k+1})$ in terms of $\nabla \phi_k(\w_{k+1})$ and the other quantities, which we substitute in the place of $\nabla \f(\w_{k+1})$ in \eqref{eq:bregman-identity-proof-sketch}. After several straightforward algebraic manipulations, including using the three-point identity \eqref{eq:def-three-point-identity}, we arrive at the key identity
\begin{align}\label{eq:proof-outline-identity}
&\f(\w_{k+1}) - \f^* \\
&\begin{aligned}
&= \frac{1}{2\eta}\nrm*{\w_k - \w^*}^2 - D_{\h}(\w^*;\w_k) \\
&- \frac{1}{2\eta}\nrm*{\w_{k+1} - \w^*}^2 + D_{\h}(\w^*;\w_{k+1}) 
\end{aligned}
\tag*{{\vast\}} $(\bT)$\ } \nonumber\\
&+ \inner{\nabla \f(\w_k) - \g_k}{\w_{k+1} - \w^*} \tag*{{\big\}}\;\;\; $(\bN)$\quad\;} \nonumber\\
&+ \inner{\nabla \phi_k(\w_{k+1})}{\w_{k+1} - \w^*} \tag*{{\big\}}\; \ $(\bI)$\;\; } \nonumber\\
&\begin{aligned}
&+ D_{\h}(\w_{k+1};\w_k) - \frac{1}{2\eta}\nrm*{\w_{k+1} - \w_k}^2  \\
& - D_{\f}(\w^*;\w_{k+1})\,.
\end{aligned} \tag*{{\Bigg\}}\; $(\bC)$\quad} \nonumber
\end{align}
Although the expression looks unwieldy at first, it leads us directly towards a strategy for analyzing Algorithm \ref{alg:ISMD}.

The $\smash{\bT}$ terms resemble those typically found in the analysis of convex optimization algorithms. In the course of proving Theorem \ref{thm:strongly-convex}, we, roughly speaking, average \eqref{eq:proof-outline-identity} over $k$, causing all but two of these terms to cancel out, and we show that under Assumption \ref{ass:similarity} with $\eta \leq \frac{1}{4\delta}$, their total contribution to the error is small. 

The $\smash{\bN}$ term captures the effect of using noisy gradients~$\g_k$ rather than $\nabla \f(\w_k)$. Under Assumption \ref{ass:noise}, its expectation is at most $2\eta\sigma^2 + \frac{1}{4\eta}\E\nrm{\w_{k+1} - \w_k}^2$. Importantly, since $\g_k$ is unbiased, we can bound this in terms of $\nrm{\w_{k+1} - \w_k}$ rather than $\nrm{\w_{k+1} - \w^*}$.

The $\smash{\bI}$ term captures the effect of only approximately computing the proximal-point update. If we could just set $\w_{k+1} = \argmin_{\w} \phi_k(\w)$, the $\smash{\bI}$ term would be zero, but this is impossible in practice. Instead, a simple application of Young's inequality bounds this term by $\frac{1}{\mu}\nrm{\nabla \phi_k(\w_{k+1})}^2 + \frac{\mu}{4}\nrm{\w_{k+1} - \w^*}^2$.

Nowhere above did we use any property of $\f$ besides those implied by Assumptions \ref{ass:similarity} and \ref{ass:noise}, but to address the $\smash{\bC}$ terms, which we use to eliminate unwanted quantities introduced above, we finally use that in the $\mu$-strongly convex setting, $D_{\f}(\w^*;\w_{k+1}) \geq \frac{\mu}{2}\nrm{\w_{k+1} - \w^*}^2$. So, when $\w_{k+1}$ is chosen so that $\nrm{\nabla \phi_k(\w_{k+1})}^2$ is sufficiently small, this allows us to cancel out everything in $\smash{\bN}$ and $\smash{\bI}$ except for $2\eta\sigma^2$, and this leaves an extra $-\frac{\mu}{4}\nrm{\w_{k+1} - \w^*}^2$ to be incorporated into the $\smash{\bT}$ terms to generate geometric shrinkage. Here, the strong convexity of $\f$ is crucially needed to cancel the dependence of the $\smash{\bI}$ term on $\nrm{\w_{k+1} - \w^*}$---without strong convexity, it is not immediately clear how the $\smash{\bI}$ term can be controlled without setting $\nabla \phi_k(\w_{k+1})=0$. Filling in the details and using standard arguments completes the proof.

While Theorem \ref{thm:strongly-convex} requires $\f$ to be $\mu$-strongly convex, a similar guarantee can be achieved when $\f$ is convex via regularization:
\begin{restatable}{theorem}{thmconvex}\label{thm:convex}
Under Assumptions \ref{ass:similarity} and \ref{ass:noise}, let $\f$ be convex and let $\eta \leq \frac{1}{4\delta}$. Then applying Algorithm \ref{alg:ISMD} to the regularized objectives $\f^{(\mu)}(\w) := \f(\w) + \frac{\mu}{2}\nrm{\w - \w_0}^2$ and $\F^{(\mu)}(\w) := \F(\w) + \frac{\mu}{2}\nrm{\w - \w_0}^2$ with $\mu = \frac{1}{\eta K}$ and any $\eta \leq \frac{1}{4\delta}$ ensures that if for each $k$
\[
\E\nrm{\nabla \phi_k(\w_{k+1})}^2 \leq
\frac{1}{4\eta^2 K}\E\nrm{\w_{k+1} - \w_k}^2 + G^2
\]
for some $G$, then
\[
\E\f\prn*{\frac{1}{K}\smash{\sum_{k=1}^K} \w_k} - \f^*
\leq \frac{9B^2}{8\eta K} + 2\eta\sigma^2 + \eta K G^2.
\]
\end{restatable}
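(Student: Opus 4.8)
The plan is to reduce the convex case to the strongly convex analysis by running Algorithm \ref{alg:ISMD} on the regularized pair $\f^{(\mu)},\F^{(\mu)}$ and then paying for the regularization at the end. The first thing I would check is that the hypotheses transfer. Since $\f^{(\mu)} - \F^{(\mu)} = \f - \F = \h$, Assumption \ref{ass:similarity} holds for the regularized problem with the \emph{same} $\delta$, so the condition $\eta \le \frac1{4\delta}$ is unchanged. Because $\f$ is convex, $\f^{(\mu)}$ is $\mu$-strongly convex. The gradient of the regularizer is deterministic, so $\g_k + \mu(\w_k - \w_0)$ is an unbiased estimator of $\nabla\f^{(\mu)}(\w_k)$ with the same variance bound $\sigma^2$, and Assumption \ref{ass:noise} transfers. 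Finally, adding $\frac\mu2\nrm{\w-\w_0}^2$ to $\F$ contributes $\frac\mu2\nrm{\w-\w_k}^2$ to its Bregman divergence, so the $\phi_k$ appearing in the theorem statement is exactly the subproblem objective Algorithm \ref{alg:ISMD} produces on the regularized problem; note too that the stopping tolerance $\frac1{4\eta^2 K} = \frac\mu{4\eta}$ then coincides with the one in Theorem \ref{thm:strongly-convex}.

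Next I would re-run the argument behind Theorem \ref{thm:strongly-convex} on the regularized problem, the one change being that I average the key identity \eqref{eq:proof-outline-identity} (with $\f$ replaced by $\f^{(\mu)}$ and $\h$ unchanged) \emph{uniformly} over $k=0,\dots,K-1$ rather than geometrically. Writing $\w^\star$ for the minimizer of $\f^{(\mu)}$ (random through $\w_0$ but measurable with respect to the initial randomness, which is what makes the $\bN$ cross term vanish in expectation), the $\bT$ terms telescope to their boundary values; using $\abs{D_\h(\w^\star;\cdot)} \le \frac\delta2\nrm{\cdot-\w^\star}^2 \le \frac1{8\eta}\nrm{\cdot-\w^\star}^2$ and discarding the nonpositive endpoint leaves at most $\frac5{8\eta}\nrm{\w_0-\w^\star}^2$. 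The role of the regularization shows up in the $\bI$/$\bC$ bookkeeping: Young's inequality bounds $\bI$ by $\frac1\mu\nrm{\nabla\phi_k(\w_{k+1})}^2 + \frac\mu4\nrm{\w_{k+1}-\w^\star}^2$, while strong convexity of $\f^{(\mu)}$ yields $D_{\f^{(\mu)}}(\w^\star;\w_{k+1}) \ge \frac\mu2\nrm{\w_{k+1}-\w^\star}^2$ in $\bC$, so the surplus $-\frac\mu4\nrm{\w_{k+1}-\w^\star}^2$ is nonpositive and simply dropped. Choosing the Young constant for $\bN$ so that the $\bN$, $\bI$, $\bC$ contributions of $\nrm{\w_{k+1}-\w_k}^2$ cancel exactly (here $\delta\le\frac1{4\eta}$ is used), and letting the stopping criterion turn $\frac1\mu\nrm{\nabla\phi_k}^2$ into $\frac1{4\eta}\nrm{\w_{k+1}-\w_k}^2+\frac{G^2}\mu$, I would sum and divide by $K$ to obtain
\[
\frac1K\sum_{k=1}^{K}\prn*{\E\f^{(\mu)}(\w_k) - \E\f^{(\mu),*}} \le \frac{5\,\E\nrm{\w_0-\w^\star}^2}{8\eta K} + 2\eta\sigma^2 + \frac{G^2}{\mu}.
\]

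To finish I would translate back to $\f$. By Jensen and convexity, $\E\f\prn*{\frac1K\sum_{k=1}^K\w_k} \le \frac1K\sum_{k=1}^K \E\f(\w_k)$, and since the regularizer is nonnegative, $\f(\w_k)\le\f^{(\mu)}(\w_k)$. For the optima, comparing $\f^{(\mu)}(\w^\star)\le\f^{(\mu)}(\w^*)$ gives both $\E\f^{(\mu),*} \le \f^* + \frac\mu2 B^2$ and $\E\nrm{\w_0-\w^\star}^2 \le B^2$. Substituting $\mu = \frac1{\eta K}$ turns $\frac{G^2}\mu$ into $\eta K G^2$ and $\frac\mu2 B^2$ into $\frac{4B^2}{8\eta K}$, which combines with the $\frac{5B^2}{8\eta K}$ optimization term to give exactly $\frac{9B^2}{8\eta K}$, yielding the claimed bound.

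The main obstacle, and the one place this departs from a black-box use of Theorem \ref{thm:strongly-convex}, is the averaging: that theorem's geometric weights are precisely what convert the $-\frac\mu4\nrm{\w_{k+1}-\w^\star}^2$ surplus into genuine contraction, whereas here I deliberately discard those terms and lean only on uniform averaging. I would need to confirm this loss is harmless, i.e.\ that at the scale $\mu = \Theta(\tfrac1{\eta K})$ the available contraction is negligible anyway, so the $O(1/(\eta K))$ rate is not degraded. Relatedly, I would justify $\mu = \frac1{\eta K}$ as the correct tradeoff: the regularization bias $\frac\mu2 B^2$ grows with $\mu$ while the strong convexity needed to tame $\bI$ improves with $\mu$, and $\mu = \Theta(\tfrac1{\eta K})$ is exactly the choice that keeps the bias at the level of the optimization error.
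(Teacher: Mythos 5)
Your proposal is correct and follows essentially the same route as the paper's proof: run the strongly convex per-step descent inequality on the regularized pair $(\f^{(\mu)},\F^{(\mu)})$, discard the geometric contraction and telescope uniformly, bound the boundary term by $\frac{5}{8\eta}\E\nrm{\w_0-\w^*_\mu}^2 \le \frac{5}{8\eta}B^2$, and pay the regularization bias $\frac{\mu}{2}B^2$ before substituting $\mu=\frac{1}{\eta K}$. The only cosmetic difference is how the key fact $\E\nrm{\w_0-\w^*_\mu}^2\le B^2$ is obtained: you compare the values $\f^{(\mu)}(\w^*_\mu)\le\f^{(\mu)}(\w^*)$ together with $\f(\w^*_\mu)\ge\f^*$, while the paper uses the first-order optimality condition of $\w^*_\mu$ plus convexity of $\f$ --- both arguments are valid.
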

The proof, which we defer to Appendix \ref{app:proof-of-thmconvex}, simply observes that $\f^{(\mu)}$ and $\F^{(\mu)}$ satisfy the conditions needed for Theorem \ref{thm:strongly-convex}, with larger $\mu$ leading to faster convergence. At the same time, when $\mu$ is small enough, optimizing $\f^{(\mu)}$ is essentially equivalent to optimizing $\f$ itself. We conclude by choosing $\mu$ to balance between these competing goals.

Finally, choosing $\eta$ optimally in the strongly convex and convex settings yields the following complexity guarantee, which we prove in Appendix \ref{app:corollary}:
\begin{restatable}{corollary}{stepsizecorollary}\label{cor:optimized-guarantees}
In the setting of Theorem \ref{thm:strongly-convex}, there is a universal constant $c$ s.t.~for any $\varepsilon > 0$, when $G^2 \leq \frac{1}{2}\mu\varepsilon$ and
\[
\eta = \frac{5}{\mu (K-1)}\Bigl(1 + \ln\Bigl(\frac{5B^2\mu (K-1)}{\varepsilon}\Bigr)\Bigr),
\]
then the output, $\hat{\w}$, of Algorithm \ref{alg:ISMD} will have suboptimality at most $\E\f(\hat{\w}) - \f^* \leq \varepsilon$ whenever
\[
K \geq c\cdot\Bigl(1 + \frac{\delta}{\mu} + \frac{\sigma^2}{\mu\varepsilon}\Bigr) 
\ln\Bigl(e + \frac{(\mu+\delta)B^2}{\varepsilon} + \frac{\sigma^2 B^2}{\varepsilon^2}\Bigr).
%\ln\prn*{e + \frac{((\mu+\delta)\varepsilon + \sigma^2)B^2}{\varepsilon^2}}.
\]
Likewise, in the setting of Theorem \ref{thm:convex}, there is a universal constant $c$ s.t.~for any $\varepsilon > 0$, when $G^2 \leq \frac{\sigma^2}{K} + \frac{\delta^2 B^2}{K^2}$ and
\[
\eta = \min \Bigl\{\frac{1}{4\delta},\,\frac{B}{\sigma \sqrt{K}}\Bigr\},
\]
then the output, $\hat{\w}$, of Algorithm \ref{alg:ISMD} will have suboptimality at most $\E\f(\hat{\w}) - \f^* \leq \varepsilon$ whenever
\[
K \geq c\cdot\Bigl(\frac{\delta B^2}{\varepsilon} + \frac{\sigma^2 B^2}{\varepsilon^2}\Bigr).
\]
\end{restatable}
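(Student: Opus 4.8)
The plan is to treat both halves of Corollary~\ref{cor:optimized-guarantees} as exercises in substituting the prescribed step size and the hypothesized bound on $G^2$ into the guarantees of Theorems~\ref{thm:strongly-convex} and~\ref{thm:convex}, bounding each of the three resulting terms separately, and then reading off the requirement on $K$ that makes their sum at most $\varepsilon$. In the strongly convex case the guarantee splits into a \emph{bias} term $\frac{5B^2}{8\eta}(1+\frac{2\eta\mu}{5})^{1-K}$, a \emph{noise} term $2\eta\sigma^2$, and an \emph{inexactness} term $G^2/\mu$; in the convex case the three terms are $\frac{9B^2}{8\eta K}$, $2\eta\sigma^2$, and $\eta K G^2$. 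In each setting I would force every term below a constant multiple of $\varepsilon$.

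For the strongly convex part, the hypothesis $G^2 \le \frac12\mu\varepsilon$ immediately gives $G^2/\mu \le \varepsilon/2$, so only the bias and noise terms remain. First I would observe that the prescribed $\eta$ makes $\frac{2\eta\mu}{5} = \frac{2(1+L)}{K-1}$, where $L := \ln(5B^2\mu(K-1)/\varepsilon)$. Using the elementary bound $(1+a/T)^{-T}\le e^{-a/2}$, valid for $0<a\le T$ (which holds once $K-1\ge 2(1+L)$), the factor $(1+\frac{2\eta\mu}{5})^{1-K}$ is at most $e^{-(1+L)} = e^{-1}\varepsilon/(5B^2\mu(K-1))$; multiplying by $\frac{5B^2}{8\eta} = \frac{B^2\mu(K-1)}{8(1+L)}$ shows the bias term is at most $\varepsilon/(40e)$, hence $\le \varepsilon/4$. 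The noise term obeys $2\eta\sigma^2\le\varepsilon/4$ precisely when $K-1\ge 40\sigma^2(1+L)/(\mu\varepsilon)$, and the admissibility constraint $\eta\le\frac1{4\delta}$ of Theorem~\ref{thm:strongly-convex} becomes $K-1\ge 20\delta(1+L)/\mu$.

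The hard part will be resolving the resulting \emph{implicit} inequality. Collecting the three requirements yields a bound of the form $K-1 \ge c'\,(1+\frac{\delta}{\mu}+\frac{\sigma^2}{\mu\varepsilon})(1+L)$, in which $L$ itself depends on $K$ through $\ln(5B^2\mu(K-1)/\varepsilon)$. To convert this into the explicit threshold in the statement I would invoke a standard ``log-linear'' self-bounding lemma (of the type: if $x \ge 2a\ln(2ab)+2a$ then $x\ge a\ln(bx)+a$), which lets me replace $K-1$ inside the logarithm by its claimed lower bound. Substituting $K-1 \asymp 1 + \frac{\delta}{\mu}+\frac{\sigma^2}{\mu\varepsilon}$ into $\frac{5B^2\mu(K-1)}{\varepsilon}$ produces exactly $\frac{(\mu+\delta)B^2}{\varepsilon}+\frac{\sigma^2 B^2}{\varepsilon^2}$, which is why the logarithmic factor in the final bound takes that form; the additive $e$ inside the logarithm keeps it bounded below by $1$.

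The convex part is more direct and involves no implicit inequality. Substituting $G^2\le \frac{\sigma^2}{K}+\frac{\delta^2 B^2}{K^2}$ gives $\eta K G^2 \le \eta\sigma^2 + \frac{\eta\delta^2 B^2}{K}$, so the whole bound is at most $\frac{9B^2}{8\eta K}+3\eta\sigma^2+\frac{\eta\delta^2 B^2}{K}$. Since $\eta=\min\{\frac1{4\delta},\frac{B}{\sigma\sqrt K}\}$ satisfies $\eta\delta\le\frac14$ and $\frac1\eta \le 4\delta+\frac{\sigma\sqrt K}{B}$, each term is $O(\frac{\delta B^2}{K}+\frac{B\sigma}{\sqrt K})$; in particular $\eta\le\frac{B}{\sigma\sqrt K}$ controls $3\eta\sigma^2\le\frac{3B\sigma}{\sqrt K}$, finishing the term-by-term estimate. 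Demanding $\frac{\delta B^2}{K}\lesssim\varepsilon$ and $\frac{B\sigma}{\sqrt K}\lesssim\varepsilon$ yields the two thresholds $K\gtrsim\frac{\delta B^2}{\varepsilon}$ and $K\gtrsim\frac{\sigma^2 B^2}{\varepsilon^2}$, whose sum is the claimed bound, with the universal constant $c$ absorbing the numerical factors throughout.
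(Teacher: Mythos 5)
Your proposal is correct and follows essentially the same route as the paper's own proof: substitute the prescribed $\eta$, bound the bias, noise, and inexactness terms separately (using an elementary lower bound on $\ln(1+z)$ to control the geometric factor), and resolve the resulting implicit inequality in $K$ via a self-bounding lemma of exactly the type the paper proves ($X \geq A(1+\ln(BX))$ holds once $X \geq A(1+4\ln(e+AB))$). The only differences are cosmetic — you use $\ln(1+x)\geq x/2$ where the paper uses $\ln(1+z)\geq z - z^2/2$, and you give more detail than the paper's one-line treatment of the convex case.
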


\subsection{Making $\nabla \phi_k$ small}\label{sec:making-the-gradient-small}

Theorems \ref{thm:strongly-convex} and \ref{thm:convex} require that each update satisfy
\begin{equation*}
\E\nrm{\nabla \phi_k(\w_{k+1})}^2 \leq
\frac{\mu}{4\eta}\E\nrm{\w_{k+1} - \w_k}^2 + G^2,
\end{equation*}
with $\mu = \frac{1}{\eta K}$ in the convex setting. This is a convenient inexactness criterion, because it consists entirely of quantities that can be evaluated at the time of execution, and similar bounds on the inaccuracy have been studied in the context of the classic proximal-point algorithm~\citep[see, e.g.,][]{rockafellar1976augmented,barre2022principled}.  Nevertheless, this raises the question of how difficult it is to find such a $\w_{k+1}$. It turns out that this can be quite easy due to the $\frac{1}{\eta}$-strong convexity of the subproblem objective $\phi_k$. 
%For instance, when $\phi_k$ is $H$-smooth, we have \citep{todo}
% \begin{equation}
% \nrm{\nabla \phi_k(\w)}^2 \leq 2H(\phi_k(\w) - \min_{\uu}\phi_k(\uu)),
% \end{equation}
% so finding an approximate minimizer in terms of function value implies making the gradient small, and approximately minimizing a smooth and strongly convex function is easy. 
For example, we could use stochastic gradient descent:
\begin{restatable}{proposition}{makinggradientsmall}\label{prop:making-gradient-small}
Let $\F$ be convex and $H$-smooth and let stochastic gradients be available for $\F$ with variance at most $\rho^2$. Then for constants $c,c'$, the output, $\hat{\w}$, of 
\[
T \geq c(1+H\eta)\max\Bigl\{\ln\Bigl(\frac{c'\prn*{1+H\eta}^2}{\mu\eta}\Bigr), 
\frac{\rho^2}{G^2} \Bigr\}
\]
steps of \textsc{SGD} on $\phi_k$ initialized at $\w_k$ satisfies
\[
\E \nrm{\nabla \phi_k(\hat{\w})}^2 \leq \smash{\frac{\mu}{4\eta}}\nrm{\hat{\w} - \w_k}^2 + G^2.
\]
\end{restatable}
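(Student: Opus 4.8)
The plan is to exploit that the subproblem objective $\phi_k$ is well-conditioned and then convert a \emph{function-value} convergence guarantee for \textsc{SGD} into the required gradient bound. Write $m := \frac1\eta$, $L := H + \frac1\eta$, and $\kappa := L/m = 1 + H\eta$. Since $\F$ is convex and $H$-smooth, the Bregman term $D_\F(\cdot\,;\w_k)$ is convex and $H$-smooth, so $\phi_k$ is $m$-strongly convex and $L$-smooth, with condition number exactly $\kappa$. Moreover, the only term of $\nabla\phi_k(\w) = \g_k + \nabla\F(\w) - \nabla\F(\w_k) + \frac1\eta(\w-\w_k)$ that varies across the inner iterations is $\nabla\F(\w)$, so a stochastic gradient for $\F$ yields an unbiased stochastic gradient for $\phi_k$ of variance at most $\rho^2$ (the offset $\g_k - \nabla\F(\w_k)$ is computed once). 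Let $\w_\star := \argmin_\w \phi_k(\w)$. The starting point is the smoothness bound $\nrm{\nabla\phi_k(\hat\w)}^2 \le 2L\prn*{\phi_k(\hat\w) - \phi_k(\w_\star)}$; controlling the gradient through the function gap rather than through $L^2\nrm{\hat\w - \w_\star}^2$ is exactly what will save a factor of $\kappa$ in the noise term.

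Next I would invoke a standard \textsc{SGD} guarantee for a strongly convex, smooth objective under a suitable (e.g.\ geometrically decreasing or restarted) step-size schedule, of the form
\[
\E\brk*{\phi_k(\hat\w) - \phi_k(\w_\star)} \le C\prn*{1 - \tfrac{1}{2\kappa}}^T \prn*{\phi_k(\w_k) - \phi_k(\w_\star)} + \frac{C'\rho^2}{m\,T},
\]
for universal constants $C, C'$. The crucial feature is that the bias contracts geometrically at a rate governed by $\kappa$, while the variance decays as $\rho^2/(mT)$; the latter requires the ``optimal'' strongly convex analysis, since a plain constant step size leaves a fixed noise floor that would cost an extra factor of $\kappa$.

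To match the inexactness criterion I then relate $\frac{\mu}{4\eta}\E\nrm{\hat\w - \w_k}^2$ to the initial gap $\Delta_0 := \phi_k(\w_k) - \phi_k(\w_\star)$. Writing $\hat\w - \w_k = (\hat\w - \w_\star) - (\w_k - \w_\star)$ and using $\nrm{a-b}^2 \ge \tfrac12\nrm{b}^2 - \nrm{a}^2$ gives $\nrm{\hat\w - \w_k}^2 \ge \tfrac12\nrm{\w_k - \w_\star}^2 - \nrm{\hat\w - \w_\star}^2$. Smoothness lower-bounds $\nrm{\w_k-\w_\star}^2 \ge \tfrac{2}{L}\Delta_0$ and strong convexity upper-bounds $\nrm{\hat\w-\w_\star}^2 \le \tfrac2m\prn*{\phi_k(\hat\w)-\phi_k(\w_\star)}$, so (recalling $m=\frac1\eta$) the budget satisfies $\frac{\mu}{4\eta}\E\nrm{\hat\w-\w_k}^2 \ge \frac{\mu}{4\eta L}\Delta_0 - \frac\mu2\,\E\brk*{\phi_k(\hat\w)-\phi_k(\w_\star)}$. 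Combining this with the smoothness bound $\E\nrm{\nabla\phi_k(\hat\w)}^2 \le 2L\,\E\brk*{\phi_k(\hat\w)-\phi_k(\w_\star)}$, it suffices that the gradient bound's geometric (bias) contribution be at most $\frac{\mu}{4\eta L}\Delta_0$ and its noise contribution be at most $G^2$. Since $\eta L^2 = (1+H\eta)^2/\eta$, the first needs $T \gtrsim \kappa\ln\prn*{\frac{c'(1+H\eta)^2}{\mu\eta}}$; since $L/m = \kappa$, the second needs $T \gtrsim \kappa\,\rho^2/G^2$. Taking $T$ at least the maximum of these, with the common prefactor $\kappa = 1+H\eta$, yields the stated bound after absorbing $C,C'$ into $c,c'$.

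I expect the main obstacle to be two-fold. First, obtaining a variance term of order $\rho^2/(mT)$ with \emph{first}-power dependence on $\kappa$, rather than the $\kappa^2$ produced by a naive distance-based or constant-step-size argument: this is precisely why the argument must route through the function gap together with $\nrm{\nabla\phi_k(\hat\w)}^2 \le 2L\prn*{\phi_k(\hat\w)-\phi_k(\w_\star)}$, and why the \textsc{SGD} schedule must have genuinely vanishing noise contribution as $T\to\infty$ while retaining geometric bias decay. Second, the bookkeeping in converting the criterion's random right-hand side $\nrm{\hat\w - \w_k}^2$ into something controllable---handled by the triangle-type split and the strong-convexity/smoothness sandwiching above---while ensuring the lower-order $\frac\mu2$ correction only inflates the coefficient of the function gap from $2L$ to $2L+\frac\mu2$ and does not spoil the constants. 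The remaining steps are routine once these are in place.
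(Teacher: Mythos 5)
Your proposal is correct, and its core machinery is identical to the paper's: the observation that $\phi_k$ is $\frac{1}{\eta}$-strongly convex and $(H+\frac{1}{\eta})$-smooth with condition number $1+H\eta$, the conversion $\nrm{\nabla\phi_k(\hat{\w})}^2 \leq 2(H+\frac{1}{\eta})\prn*{\phi_k(\hat{\w}) - \phi_k(\w^*_\phi)}$, and the same standard strongly convex \textsc{SGD} guarantee (geometric bias decay plus a noise term of order $\eta\rho^2/T$), leading to the same two requirements on $T$. Where you genuinely differ is in the closing bookkeeping that relates the unobservable initial error to the observable budget $\frac{\mu}{4\eta}\nrm{\hat{\w}-\w_k}^2$. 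The paper goes ``forward'': it bounds $\nrm{\w_k - \w^*_\phi}^2 \leq 2\nrm{\hat{\w}-\w_k}^2 + 2\nrm{\hat{\w}-\w^*_\phi}^2$, turns the second term back into a function gap via strong convexity, and then absorbs the resulting self-referential term on the right into the left, a bootstrapping step that needs a preliminary condition $T \gtrsim (1+H\eta)\ln(1+H\eta)$ whose consistency with the final choice of $T$ must be verified separately. You go ``backward'': writing $\Delta_0 := \phi_k(\w_k)-\phi_k(\w^*_\phi)$, you lower bound the budget by $\frac{\mu}{4\eta(H+1/\eta)}\Delta_0 - \frac{\mu}{2}\prn*{\phi_k(\hat{\w})-\phi_k(\w^*_\phi)}$ and move the correction to the left-hand side, inflating the gradient coefficient from $2(H+\frac{1}{\eta})$ to $2(H+\frac{1}{\eta})+\frac{\mu}{2}$. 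Your variant avoids the paper's self-absorption step entirely, which is a mild simplification; its cost is that keeping $2(H+\frac{1}{\eta})+\frac{\mu}{2} = O(H+\frac{1}{\eta})$, and hence keeping the noise requirement at order $(1+H\eta)\rho^2/G^2$, silently requires $\mu \leq H + \frac{1}{\eta}$ --- exactly the fact the paper proves in the last display of its proof, using that $\F$ being $H$-smooth makes $\f$ $(H+\delta)$-smooth and that $\eta \leq \frac{1}{4\delta}$ --- so you should state and justify that inequality rather than asserting that the correction ``does not spoil the constants.''
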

The proof, which we defer to Appendix \ref{app:proof-of-makinggradientsmall}, is based on the observation that $\F$ being $H$-smooth implies that $\phi_k$ is $(H+\frac{1}{\eta})$-smooth, so $\nrm{\nabla \phi_k(\hat{\w})}^2 \leq 2(H+\frac{1}{\eta})(\phi_k(\hat{\w}) - \phi_k^*)$, and plugging in the standard suboptimality guarantee for $T$ steps of \textsc{SGD} on a smooth and strongly convex objective eventually leads to the desired bound. 

There is nothing particularly special about using \textsc{SGD} to compute the update $\w_{k+1}$. An advantage of our method is that any algorithm for approximately minimizing $\phi_k$ will suffice when $\F$ is smooth, so we can exploit structure in~$\phi_k$ by using a specialized optimization algorithm. There are also other, more exotic \textsc{SGD} variants that directly target a solution with a small gradient norm \citep{allen2018make,foster2019complexity}, and can have a better dependence on $\rho^2$.

\subsection{Comparison against directly optimizing $\f$}\label{sec:comparison-with-direct-minimization}

To get a better sense of what Theorems \ref{thm:strongly-convex} and \ref{thm:convex} mean, we will compare them against what we could do by just directly optimizing the objective $\f$ using stochastic gradients. Suppose that Assumption \ref{ass:noise} holds, the objective $\f$ is $H$-smooth, $\nrm{\w_0 - \w^*}^2 = B^2$, and $\f$ is either convex or $\mu$-strongly convex. In this case, a natural and popular approach to minimizing $\f$ using a total of $K$ stochastic gradients would be to execute $K$ steps of \textsc{SGD}. Ignoring constant factors, it is well-known that this would yield an $\varepsilon$-accurate solution when \citep{nemirovskyyudin1983}
\begin{align}
K &\geq \frac{HB^2}{\varepsilon} + \frac{\sigma^2 B^2}{\varepsilon^2} \label{eq:comparison-to-sgd-convex}\\
K &\geq \frac{H}{\mu}\ln\frac{HB^2}{\varepsilon} + \frac{\sigma^2}{\mu \varepsilon}.\label{eq:comparison-to-sgd-strongly-convex}
\end{align}
Instead, given a proxy $\F$ satisfying Assumption \ref{ass:similarity}, we could use the same number of stochastic gradients from~$\f$ to execute $K$ steps of Algorithm \ref{alg:ISMD}. Ignoring constants and a logarithmic factor in the strongly convex setting, Corollary \ref{cor:optimized-guarantees} shows that this yields an $\varepsilon$-accurate solution when
\begin{align}
K
&\geq \frac{\delta B^2}{\varepsilon} + \frac{\sigma^2 B^2}{\varepsilon^2} \label{eq:ismd-convex}\\
K 
&\geq \frac{\delta}{\mu}\ln\frac{\delta B^2}{\varepsilon} + \frac{\sigma^2}{\mu \varepsilon}.\label{eq:ismd-strongly-convex}
\end{align}
Comparing \eqref{eq:comparison-to-sgd-convex} and \eqref{eq:comparison-to-sgd-strongly-convex} against \eqref{eq:ismd-convex} and \eqref{eq:ismd-strongly-convex}, we see that Algorithm \ref{alg:ISMD} essentially replaces \textsc{SGD}'s dependence on $H$, the smoothness constant of $\f$, with $\delta$, the smoothness constant of $\h=\f-\F$. So if a proxy $\F$ can be found with $\delta \ll H$, Algorithm \ref{alg:ISMD} may have a much better guarantee than \textsc{SGD}. 

On the other hand, the second, ``statistical terms'' of our guarantees match those of \textsc{SGD}. These statistical terms capture information-theoretic limits to optimizing $\f$ using only $K$ stochastic gradients with variance $\sigma^2$, and these are known to be unimprovable in the worst case \citep{nemirovskyyudin1983}, so our algorithm is statistically optimal (up to a logarithmic factor in the strongly convex setting). In fact, long-standing lower bounds show that obtaining an $\varepsilon$-accurate solution using \emph{any} algorithm that only uses $K$ stochastic gradients from $\f$ requires at least \citep{nemirovskyyudin1983}
\begin{align*}
K &\geq \frac{HB^2}{\sqrt{\varepsilon}} + \frac{\sigma^2 B^2}{\varepsilon^2} \\
K &\geq \smash{\sqrt{\frac{H}{\mu}}}\ln\frac{HB^2}{\varepsilon} + \frac{\sigma^2}{\mu \varepsilon}.
\end{align*}
% and these rates are achieved by using an accelerated variant of \textsc{SGD} \citep{lan2012optimal,ghadimi2013optimal}. 
Therefore, when $\delta \leq \frac{H}{\sqrt{\varepsilon}}$ in the convex setting or $\delta \lesssim \sqrt{H\mu}$ in the strongly convex case, then Algorithm \ref{alg:ISMD} can surpass this lower bound by exploiting the proxy $\F$ in order to achieve a better guarantee than could be achieved by \emph{any} algorithm that only uses $K$ stochastic gradients from $\f$.

Although Algorithm~\ref{alg:ISMD} only requires one stochastic gradient from $\f$ for each iteration, each update does require finding an approximate stationary point of the function $\phi_k$, which is more computationally expensive than one \textsc{SGD} update. That said, in the $\mu$-strongly convex case (the conclusion is similar in the merely convex setting), if we ignore all constant and logarithmic factors, Corollary \ref{cor:optimized-guarantees} shows that for $K := \frac{\delta}{\mu} + \frac{\sigma^2}{\mu\epsilon}$, $G^2 = \mu\epsilon$, and $\eta = \frac{1}{\mu K}$, Algorithm \ref{alg:ISMD} will find an $O(\epsilon)$-approximate minimizer of $\f$. Furthermore, Proposition \ref{prop:making-gradient-small} (with $\rho = \sigma$) shows that the proximal subproblem in each iteration of Algorithm~\ref{alg:ISMD} can be solved to sufficient accuracy using $T := \frac{K}{\mu K} + \frac{H\sigma^2}{\mu^2 K \epsilon}$ steps of \textsc{SGD} on $\phi_k$. Therefore, the total number of gradient update computations needed to implement Algorithm \ref{alg:ISMD} is $TK = \frac{H}{\mu} + \frac{H}{\mu}\cdot\frac{\sigma^2}{\mu \epsilon}$. In contrast, \eqref{eq:comparison-to-sgd-strongly-convex} implies that \textsc{SGD} directly on $\f$ would require $S := \frac{H}{\mu} + \frac{\sigma^2}{\mu \epsilon}$ gradient update computations to reach an $O(\epsilon)$-approximate minimizer of $\f$. 

So, \textsc{SGD} directly on $\f$ could be computationally cheaper than Algorithm \ref{alg:ISMD} in terms of the total number of floating point operations needed since $S < TK$. However, crucially, Algorithm \ref{alg:ISMD} only requires $K$ stochastic gradients from $\f$ itself (the other gradients come from $\F$), while \textsc{SGD} needs $S > K$. So to summarize, executing Algorithm \ref{alg:ISMD} could require more computation than \textsc{SGD} on $\f$, but it can make up for this by using fewer stochastic gradients from $\f$, which could yield significant savings when $\f$ is harder to access. 

\subsection{Connection to Preconditioning}\label{sec:preconditioning}

A useful way to understand Algorithm \ref{alg:ISMD} is to consider the special case where the proxy $\F(\w) = \frac{1}{2}\w\T\PP\w + \mathbf{b}\T\w$ is quadratic. In this case, exactly solving the proximal subproblems in Algorithm \ref{alg:ISMD} amounts to solving
\begin{gather}
\g_k + \nabla \F(\w_{k+1}) - \nabla \F(\w_k) + \frac{1}{\eta}(\w_{k+1} - \w_k) = 0 \nonumber\\
\iff \w_{k+1} = \w_k - \eta\prn*{\mathbf{I} + \eta\PP}^{-1}\g_k.
\end{gather}
That is, each update in Algorithm \ref{alg:ISMD} amounts to a single preconditioned SGD update with preconditioning matrix $(\mathbf{I} + \eta\PP)^{-1}$. When a quadratic proxy that satisfies Assumption \ref{ass:similarity} can be found, then these updates with a fixed preconditioning matrix will be very effective at minimizing the objective. 

However, for many objectives, the Hessian $\nabla^2 \f$ is not constant (nor close to constant) and so it cannot be well-approximated everywhere by a matrix $\PP$ and thus any fixed preconditioner may not be effective. In this more general case, we can view each step of Algorithm \ref{alg:ISMD} as approximating a locally preconditioned SGD update with the preconditioning matrix $(\mathbf{I} + \eta\nabla^2\F(\w_k))^{-1}$. This is because for sufficiently smooth $\F$, $\nabla \F(\w_{k+1}) - \nabla \F(\w_k) \approx \nabla^2\F(\w_k)(\w_{k+1} - \w_k)$ so
\begin{gather}
\g_k + \nabla \F(\w_{k+1}) - \nabla \F(\w_k) + \frac{1}{\eta}(\w_{k+1} - \w_k) = 0 \nonumber\\
\implies \w_{k+1} \approx \w_k - \eta\prn{\mathbf{I} + \eta\nabla^2\F(\w_k)}^{-1}\g_k.
\end{gather}
Of course, this is not a perfect correspondence, but it demonstrates the role of $\F$, which is effectively to adapt each update along the $\g_k$ direction to the local curvature of $\F$. Furthermore, when the curvature of $\F$ is close enough to that of $\f$, this also indicates why Algorithm \ref{alg:ISMD}'s updates should be more effective than SGD directly on $\f$.

\section{Applications}\label{sec:applications}

Algorithm \ref{alg:ISMD} allows us to reduce the number of stochastic gradients needed from the objective $\f$ by approximately solving a sequence of strongly convex subproblems based on $\F$. Accordingly, this approach is most advantageous when $\F$ is a faithful approximation of $\f$ in the sense that $\delta$ is small, and also when the model $\F$ is more accessible than the true objective $\f$. This can arise in numerous situations, and in this section we will discuss several examples. 

\subsection{Regression and classification with costly labels} \label{sec:binary-logistic-regression}

In supervised learning, we want to train using input-label pairs $(x,y) \sim \mc{D}$. Often, we can easily collect a large sample of i.i.d.~inputs $x_1,x_2,\dots$, but obtaining labels for these inputs can be much more costly. For example, labelling whether an MRI scan contains a tumor may require consulting with (and likely paying) a trained medical technician. So, we would like to learn using as few labels as possible, and our algorithm can provide a means of doing this.

For inputs $x \in \R^d$ and labels $y \in \R$, least squares regression, where $\f(\w) = \frac{1}{2}\E_{(x,y)\sim\mc{D}}[(\w\T x - y)^2]$, is one of the main workhorses of statistical analysis. Conveniently, the Hessian of this loss, $\nabla^2 \f(\w) = \E_{x\sim\mc{D}_x}[xx\T]$, does not depend on the labels $y$, so we can construct a proxy $\F(\w) = \frac{1}{2}\w\T \E_{x\sim\mc{D}_x}[xx\T] \w$ that satisfies Assumption \ref{ass:similarity} with $\delta = 0$ without using any labels! Thus, we can implement Algorithm \ref{alg:ISMD} using just one labelled sample $(x_k,y_k)$ per iteration to estimate $\g_k = (\w\T x_k - y_k)x_k$, and then approximately minimizing $\phi_k$ only requires $x$ samples. 

% Therefore, given any estimate, $\hat{\Sigma}$, of the covariance matrix that is $\delta$-accurate in the operator norm, we can construct a proxy $\F(\w) = \frac{1}{2}\w\T \hat{\Sigma}\w$ which satisfies Assumption \ref{ass:similarity} without any labels. Therefore, the only labels needed to run Algorithm \ref{alg:ISMD} are those needed for the stochastic gradient estimates $\g_k$. \todo{matrix concentration to show what $\delta$ would be with $n$ $x$ samples.}

Similarly, for inputs $x\in\R^d$ and binary labels $y \in \crl{0,1}$, the logistic regression loss function is
\begin{equation*}
\f(\w) = -\underset{(x,y)\sim\mc{D}}{\E}\brk*{y\ln s(\w\T x) + (1-y)\ln (1-s(\w\T x))},
\end{equation*}
where $s(z) = 1/(1+\exp(-z))$ denotes the logistic function. As in the previous example, the Hessian of this loss is
\begin{equation*}
\nabla^2 \f(\w) = \underset{(x,y)\sim\mc{D}}{\E}\brk*{s(\w\T x)(1-s(\w\T x)) xx\T} ,
\end{equation*}
which again does not depend on the labels $y$. Therefore, we can construct a proxy loss by simply assigning all points the label $y=1$ (or assigning them any arbitrary labels):
\begin{equation*}
\F(\w) := -\underset{x\sim\mc{D}_x}{\E}[\ln s(\w\T x_n)].
\end{equation*}
This ensures $\nabla^2\F = \nabla^2\f$ so that Assumption \ref{ass:similarity} holds with $\delta = 0$, despite using no labels. So again, Algorithm  \ref{alg:ISMD} can be implemented using as little as one label per iteration, plus sufficient $x$ samples to approximately minimize $\phi_k$.

% \textbf{francis}: may also lead to easy minimization of $\hat{L}$?\blake{I looked at this for a bit, but there does not seem to be an obvious way to use this structure to solve the subproblems faster.}

\subsection{Synthetic data, simulators, and meta-learning}

In many machine learning problems of interest, it is difficult to collect i.i.d.~samples from the target distribution. For instance, as discussed in the previous section, obtaining a label in medical imaging applications may require a consultation with a radiologist and perhaps even an additional biopsy in difficult cases. This can make it costly and time-consuming to obtain a large training set. 
To mitigate this issue, there has been great interest in using synthetic data as a complement to or substitute for costly samples from the target distribution. Recent advances in generative models like generative adversarial networks (GANs), variational autoencoders, diffusion models, etc.~have raised the possibility of using a small seed of real data to generate an essentially unlimited quantity of synthetic training data. For instance, \citet{frid2018synthetic} used a GAN to augment a small dataset of 182 CT scans of liver lesions, which allowed them to train a classifier with better performance. Similarly, meta-learning \citep{metalearningsurvey}, multitask learning \citep{zhang2021survey}, and transfer learning \citep{weiss2016survey} are all efforts to improve performance on a given task by leveraging data collected for other, related tasks. For example, when learning to classify images of giraffes, it is common to use another image dataset like ImageNet for pretraining or joint training---even though ImageNet does not contain labelled images of giraffes. 

In addition to possible difficulties in obtaining the training data, evaluating the loss or its gradient can be expensive. In robotics, evaluating the reward achieved by a given policy might require waiting for a physical robot to execute the policy for a period of time, and waiting seconds or minutes for each function or gradient evaluation can greatly slow training. In addition, training something like a self-driving car in the real world carries the risk of damaging the vehicle or injuring pedestrians. As a result, it is very common to use simulators like MuJoCo \citep{todorov2012mujoco} and the OpenAI Gym \citep{brockman2016openai}, which make it possible to train models in a virtual environment, allowing for faster, risk-free processing. 

Synthetic data and simulators are often used to define a surrogate loss function that is directly minimized to train the model. In meta-learning, multitask learning, or transfer learning, other tasks are usually used as a prior that biases training towards models that also perform well on all tasks simultaneously. Therefore, all of these approaches will only work to the extent that the minima of the surrogate losses correspond to minima of the target loss $\f$ itself. Indeed, recent work has cast some doubt on the utility of, e.g., ImageNet pretraining \citep{he2019rethinking,kornblith2019better} and robots trained on simulators frequently generalize poorly to the real world \citep{zhao2020sim}, presumably because these surrogates can bias the model towards worse solutions.
In contrast, our approach does not rely on minima of the proxy having any particular correspondence with those of the target; instead, we use the curvature of the proxy to speed up optimization of the objective itself.

\subsection{Learning with public and private data}

Another natural setting where our algorithm could be useful is when the objective $\f$ is based on private data. One approach to learning from this data while maintaining rigorous privacy guarantees is to use a differentially private algorithm to generate a synthetic dataset, which could then be used freely as described in the previous section \citep[see, e.g.,][]{amin2019differentially,bowen2021comparative} 

Our method might also be useful when a stock of non-private data is be available in addition to the sensitive data. For example, some patients could elect to provide their medical data to researchers without restriction, while others only do so on the condition that their privacy is rigorously protected. Similarly, different individuals may be subject to different regulatory regimes---data from E.U.~residents must be used in accordance with more stringent GDPR rules that do not apply to, e.g., US residents. In these cases, one could use just the non-private data to learn, and avoid the trouble of using the protected data, but this could be suboptimal if a large fraction of the total data is private. Instead, we could use the non-private data to define $\F$ while using a privacy-preserving mechanism to compute stochastic gradients of $\f$ from the protected samples \citep{dwork2014algorithmic}. 
When the private and non-private samples have similar distributions, we would expect Assumption \ref{ass:similarity} to hold with small $\delta$, meaning that relatively few iterations of Algorithm \ref{alg:ISMD} will be needed to achieve low error, and therefore the private data would need to be accessed fewer times. Since privacy guarantees degrade with the number of accesses, this could allow for better performance and better privacy guarantees than would be possible using the private data alone. 

\subsection{Handling distribution shifts}

Often, we would like to learn in the face of distribution shift---a situation where the training distribution differs from the test distribution. For example, the distribution of inputs and outputs could be changing over time, and the training set is collected today and then the model is deployed tomorrow. In recent work, \citet{lazaridou2021mind} argue that the performance of neural language models can degrade over time for this very reason. To keep a model up-to-date, one could periodically collect a new dataset and train a whole new model. However, collecting a large enough quantity of high-quality data could be expensive and time consuming, and it would be better to take advantage of the older data that is already available. To apply our algorithm, we could have $\f$ be the loss on the current data distribution, while out-of-date data could be used to define $\F$. As long as the distribution shift is not too dramatic, we could expect Assumption \ref{ass:similarity} to hold with a relatively small $\delta$, and so to implement our method, only a small amount of new data would needed.

Distribution shifts can also arise when the training distribution is stationary but biased. For example, survey data is often affected by selection, acquiescence, or social desirability biases which can lead to differences between survey results and reality. Relatedly, when learning from a data source in which one group or class is rare, we may up-weight the loss on the rare group in order to prevent the model from learning to ignore it, which corresponds to targeting a test distribution where that group's prevalence is higher. In either case, our algorithm could be applied as described above using relatively few unbiased samples.

\section{Experiments}\label{sec:experiments}

To exhibit the effectiveness of our method in practice, we conducted several experiments on realistic problems. 
\begin{figure*}[t]
\centering
\includegraphics[width=0.3\textwidth]{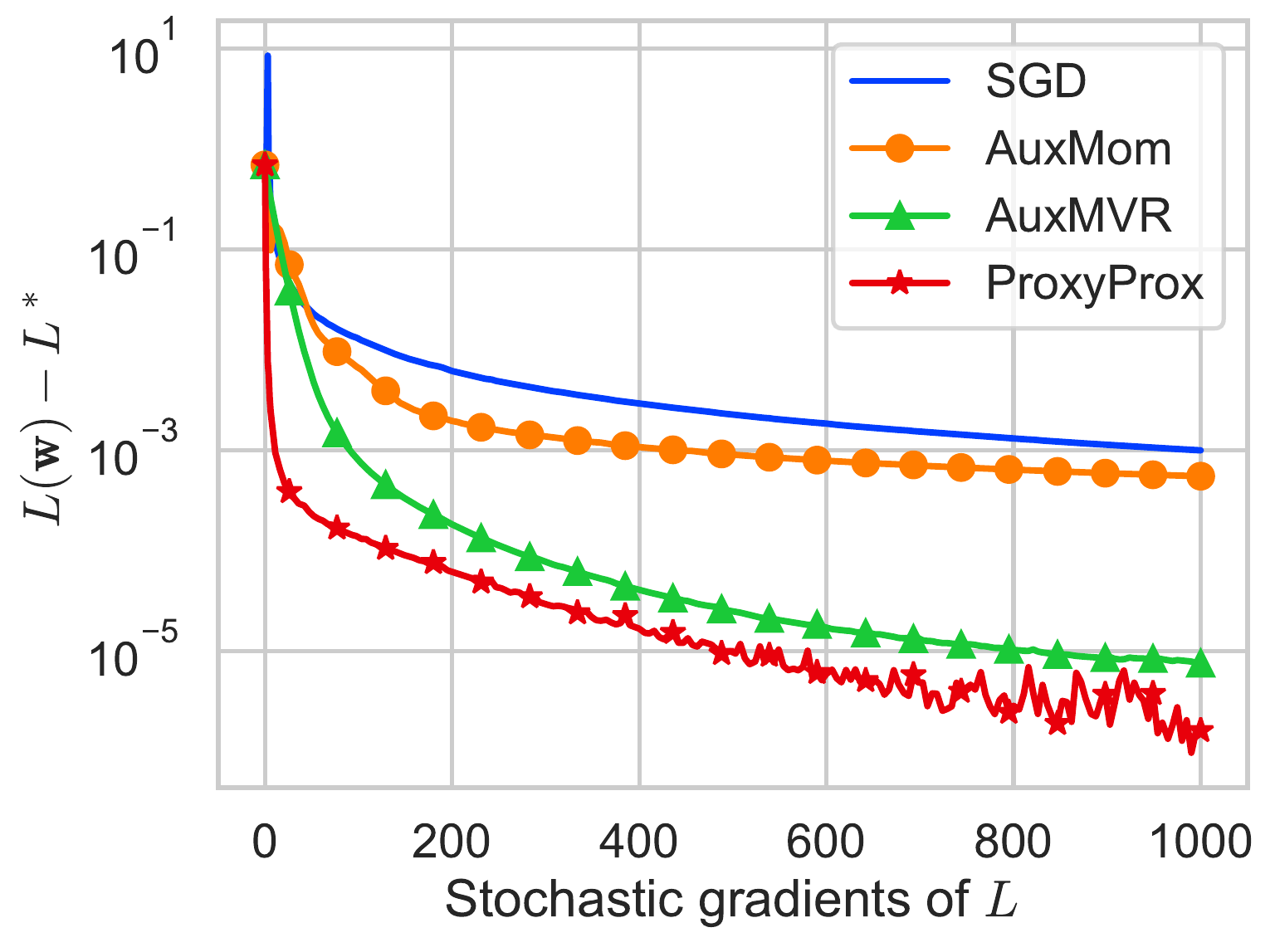}
\includegraphics[width=0.3\textwidth]{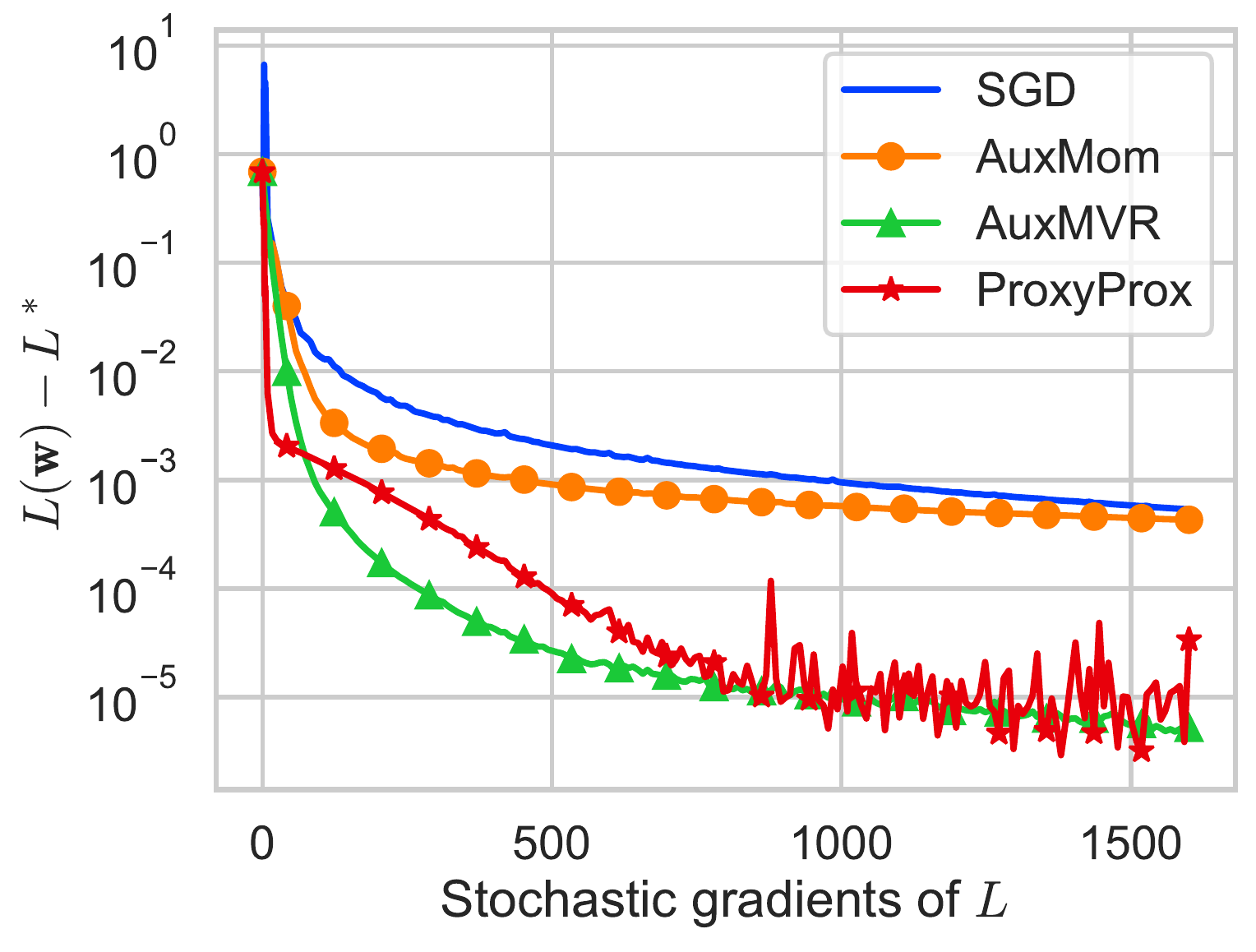}
\includegraphics[width=0.3\textwidth]{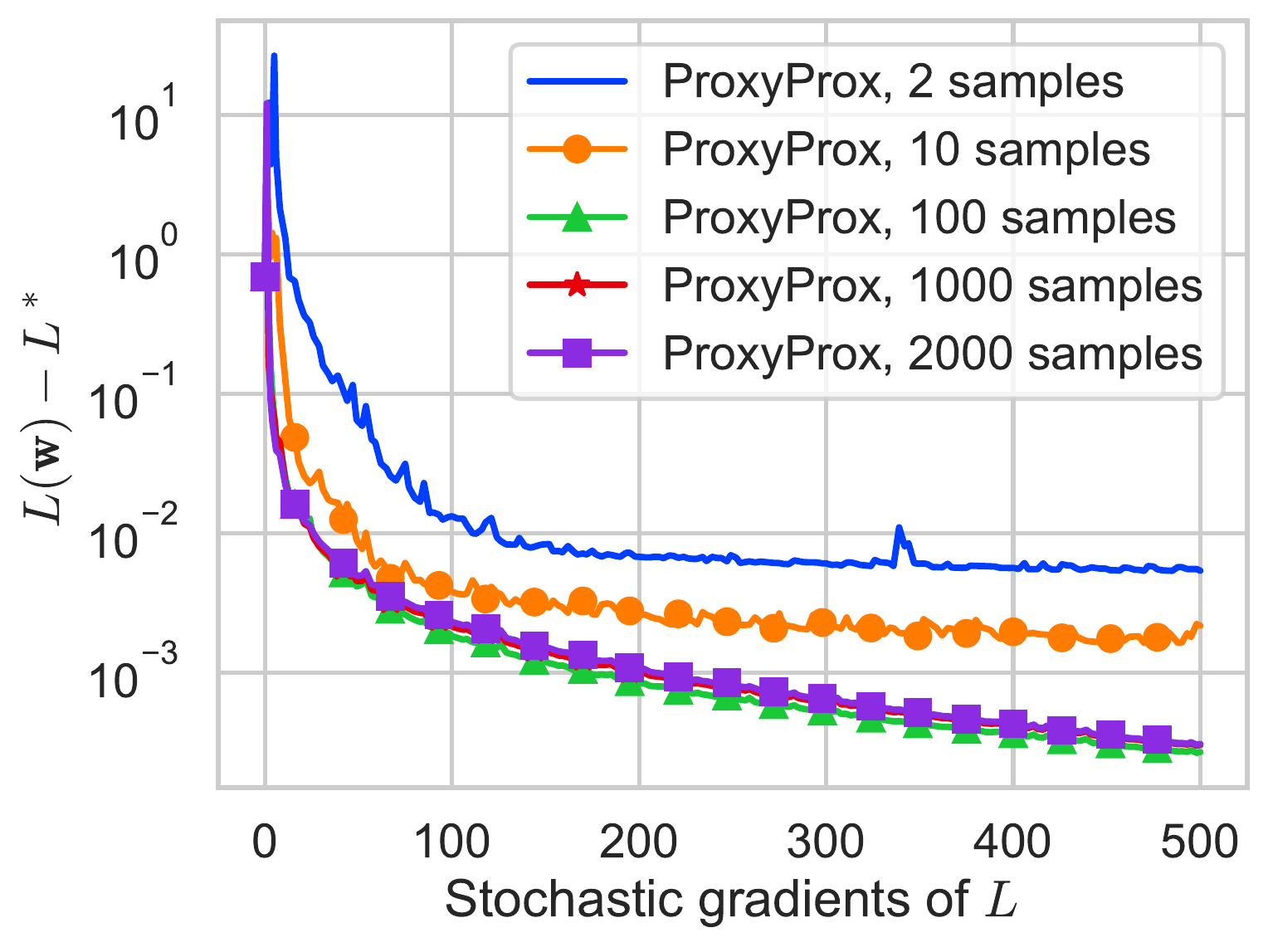}
\vspace{-4mm}
\caption{Binary logistic regression on `mushrooms'. Left: batch size 1024; middle: batch size 256; right: performance of \textsc{ProxyProx} with $\F$ subsampled from $\f$, stochastic gradients of $\f$ computed with batch size 256.}
\vspace{-1mm}
\label{fig:logistic_fake}
\end{figure*}

\textbf{Logistic regression:} 
First, we consider a simple binary logistic regression problem with the `mushrooms' dataset, which consists of $8124$ samples of dimension $112$. In this experiment, the objective $\f$ is the logistic regression loss evaluated on the training data plus an $\ell_2$ regularizer with weight $\mu=10^{-6}H$, where $H$ is the smoothness parameter of the objective. %, which makes the condition number equal to $10^6$. 
Stochastic gradients for $\f$ are calculated by evaluating the gradient on a minibatches of size 256 or 1024 drawn uniformly with replacement. To simulate a situation where labels are hard to come by as discussed in Section~\ref{sec:binary-logistic-regression}, we define the proxy $\F$ by assigning random labels to the input features. The results, depicted in Figure \ref{fig:logistic_fake}, show that our method  \textsc{ProxyProx} has competitive performance compared with performing \textsc{SGD} directly on the objective. We are also competitive with the \textsc{AuxMom} and \textsc{AuxMVR} algorithms proposed by \citet{chayti2022optimization}, which are momentum and variance-reduced analogues of \textsc{ProxyProx}. 
In Figure \ref{fig:logistic_fake}, the $x$-axis counts how many stochastic gradients from $\f$ have been used by each method, so this comparison holds constant the amount of access to $\f$, although the computational cost of \textsc{ProxyProx}, \textsc{AuxMom} and \textsc{AuxMVR} is higher due to the approximate proximal-point updates in each iteration. We approximately solve each of the $\phi_k$ subproblems using gradient descent. For each algorithm in the experiment, we selected hyperparameters like $\eta$ using grid search to minimize the loss after 250 steps and ran the best option for the full 1000 iterations. We set \textsc{AuxMom} and \textsc{AuxMVR}'s $a=0.1$, which corresponds to the standard momentum setting of $0.9$.

\begin{figure*}[t]
\centering
\includegraphics[width=0.28\textwidth]{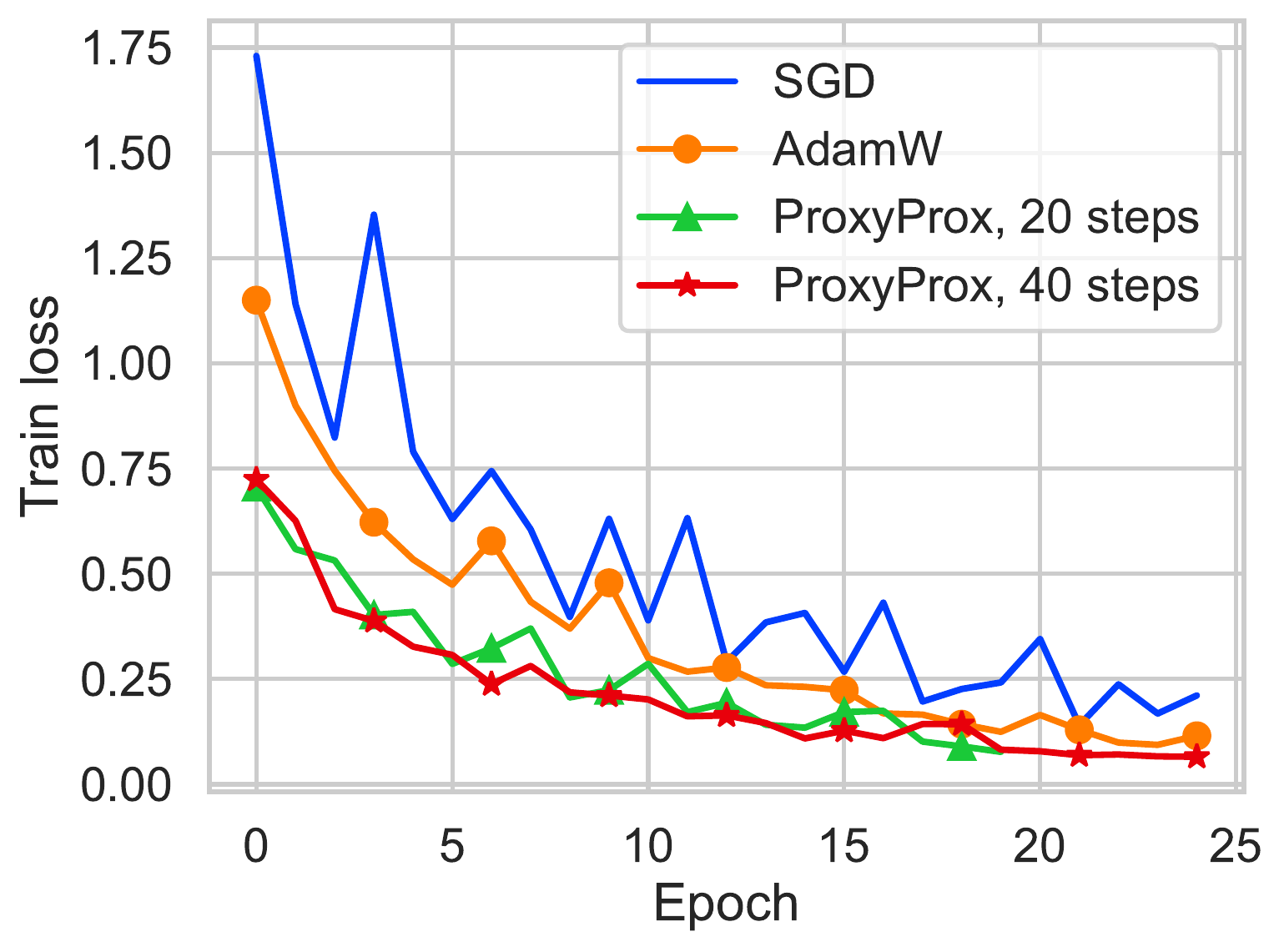}
\includegraphics[width=0.28\textwidth]{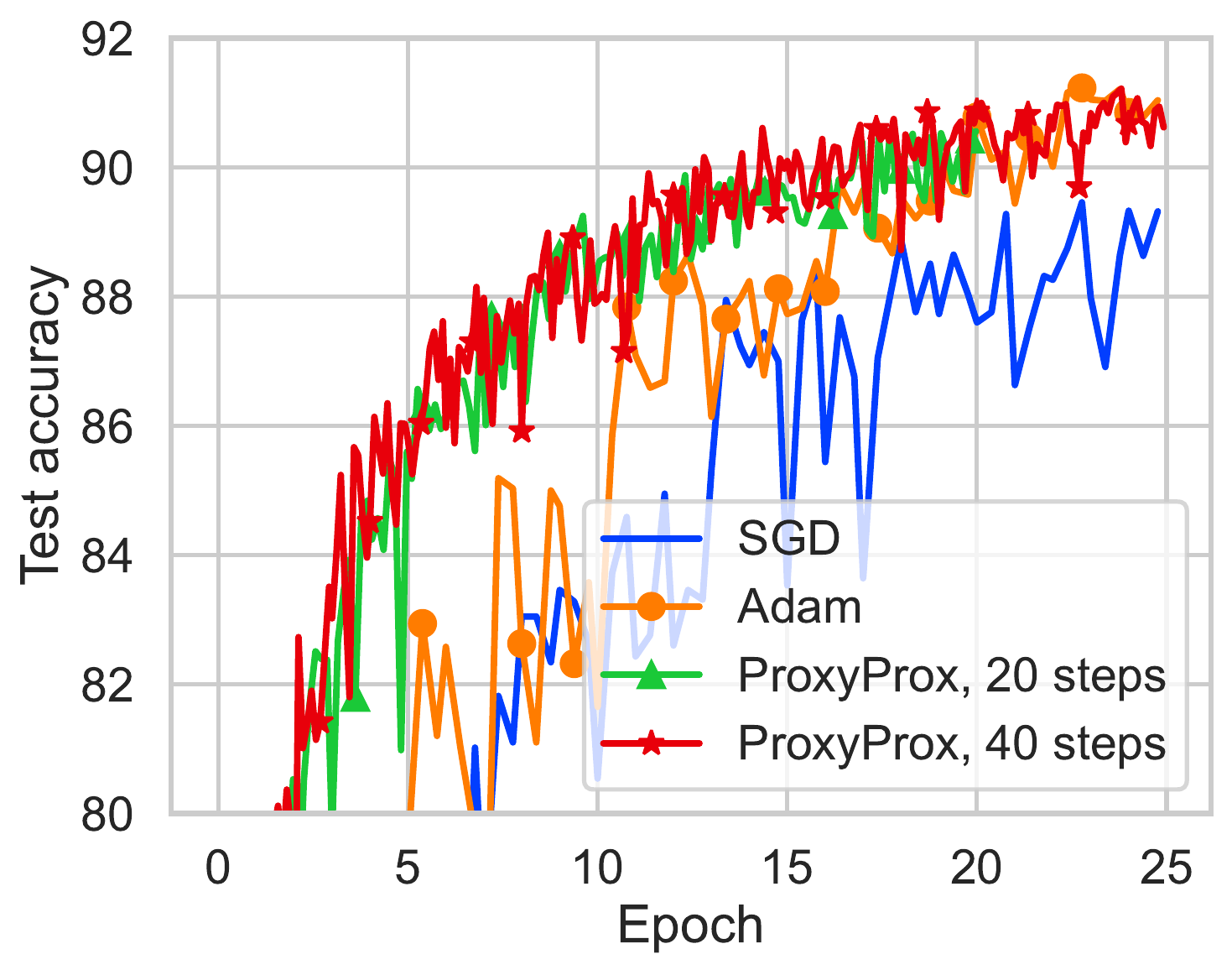}
\includegraphics[width=0.28\textwidth]{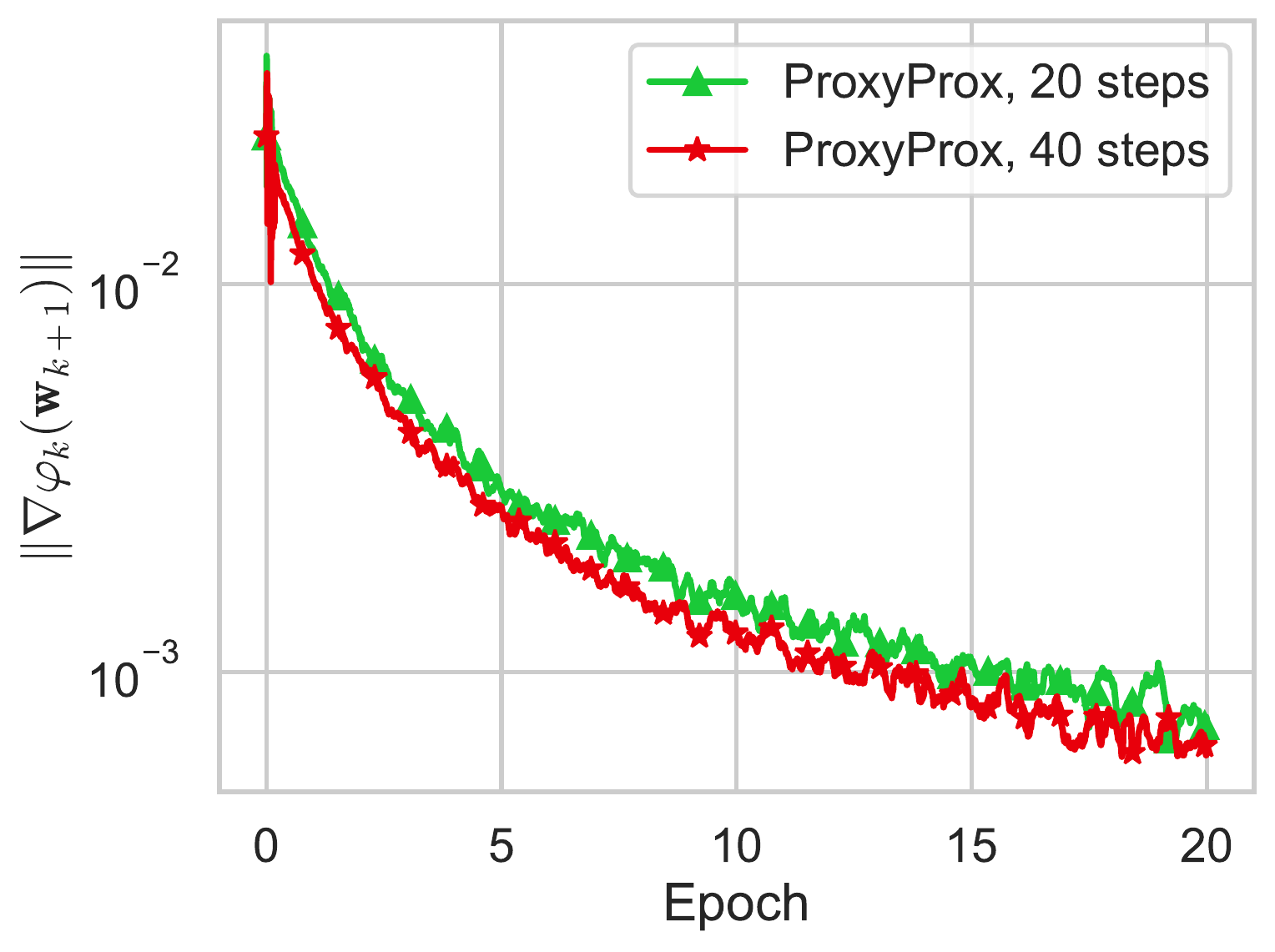}
\vspace{-3mm}
\caption{CIFAR-10 experiment. Left: the train loss, $\f$; middle: the test accuracy; right: $\|\nabla \phi_k(\w_{k+1})\|$ after optimizing $\phi_k$ with $20$ or $40$ \textsc{SGD} steps. The x-axis represents the number of passes over the full dataset defining $\f$. %, and does not take into account the cost of the extra computation required to minimize $\phi_k$.
}
\vspace{-2mm}
\label{fig:NN}
\end{figure*}

\textbf{ResNet-18:} In this experiment, summarized in Figure \ref{fig:NN}, we train a ResNet-18 network on CIFAR-10, defining $\F$ using a $2560$ subset of the images, and using the full train dataset of $50000$ images for $L$. While $\f$ corresponds to the training loss, we also plot the test loss to show that our method does not overfit. We compare running our method with either $20$ and $40$ iterations of \textsc{SGD} (equivalent to 1 epoch and 2 epochs on the $\F$ data) to approximately minimize $\phi_k$, with the stepsize tuned by grid search and ultimately set to $0.01$. Figure \ref{fig:NN} indicates that a single pass over the subsampled data seems sufficient to minimize $\phi_k$. We use minibatch size of 128 for both $\g_k$ and the SGD updates to minimize $\phi_k$. Our method can be better than \textsc{AdamW} for the first $\sim 25$ epochs, and it improves more convincingly over SGD. We use standard stepsizes: $0.1$ for \textsc{SGD}, $0.001$ for \textsc{AdamW}, and weight decay of $0.1$ for \textsc{AdamW}, which gave the best test accuracy in a grid search, and all methods also used cosine annealing. We note that just training on $\F$ using \textsc{SGD} plateaus at test accuracy 65 after just a few passes, so, our method's improvement is indeed coming from using $\F$ jointly with $\f$.

\section{Open problems and future directions}\label{sec:conclusion}

In the convex setting, we show in Section \ref{sec:convergence-analysis} that under Assumptions \ref{ass:similarity} and \ref{ass:noise}, Algorithm \ref{alg:ISMD} can take advantage of the proxy to achieve a convergence rate comparable to SGD on a $\delta$-smooth function---despite the fact that $\f$ may not be smooth, and even if it is smooth, its smoothness parameter may be much larger than $\delta$. However, the complexity guarantees in Corollary \ref{cor:optimized-guarantees} require knowledge of problem parameters, including $\delta$ which may be hard to estimate in practice. In future work, it may therefore be useful to explore whether similar performance could be achieved using a method that is adaptive to unknown problem parameters. Also, the uniform bound on the stochastic gradient variance in Assumption \ref{ass:noise} can be unrealistic, and in other contexts it can be possible to show similar results under the weaker assumption that the gradient variance is bounded only at $\w^*$ \citep{moulines2011non,dragomir2021fast}.

Going beyond the convex setting is also important in machine learning, where many models used in practice, like neural networks, give rise to non-convex training objectives. Conceptually, the idea of using the proxy $\F$ to guide optimization of the objective $\f$ seems sound even when these functions are non-convex, and our CIFAR-10 experiment shows that it can work well. However, there is still work to do in proving theoretical guarantees. We can show that Algorithm \ref{alg:ISMD} converges to the vicinity of a stationary point: %(proving convergence to a minimizer of a non-convex objective is, of course, generally impossible):
\begin{restatable}{theorem}{nonconvex}\label{thm:non-convex}
Under Assumptions \ref{ass:similarity} and \ref{ass:noise}, with $\f$ differentiable but potentially non-convex, let $\eta \leq \frac{1}{4\delta}$. If
\[
\E\nrm{\phi_k(\w_{k+1})}^2 
\leq \frac{7}{16\eta^2}\E\nrm{\w_{k+1} - \w_k}^2 + \frac{1}{8}\nrm{\nabla \f(\w_{k+1})}^2,
\]
and $\E\phi_k(\w_{k+1}) \leq  \E\phi_k(\w_k)$ for each $k$, then
\[
\frac{1}{K}\,\smash{\sum_{k=1}^K}\, \E\nrm{\nabla \f(\w_k)}^2 
\leq \frac{48(\f(\w_0) - \f^*)}{\eta K} + 8\sigma^2.
\]
\end{restatable}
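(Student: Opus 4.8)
The plan is to run a telescoping descent argument on $\f$ and convert the per-step decrease into a bound on the averaged squared gradient norm. After reindexing, $\sum_{k=0}^{K-1}\nrm{\nabla\f(\w_{k+1})}^2 = \sum_{k=1}^K\nrm{\nabla\f(\w_k)}^2$, so it suffices to control $\E\nrm{\nabla\f(\w_{k+1})}^2$ step by step. First I would establish a descent inequality for $\f$. Writing $\f=\F+\h$, the exact identity $\F(\w_{k+1})-\F(\w_k)=D_{\F}(\w_{k+1};\w_k)+\inner{\nabla\F(\w_k)}{\w_{k+1}-\w_k}$ together with the $\delta$-smoothness of $\h$ from Assumption \ref{ass:similarity} gives
\[
\f(\w_{k+1})-\f(\w_k)\le D_{\F}(\w_{k+1};\w_k)+\inner{\nabla\f(\w_k)}{\w_{k+1}-\w_k}+\tfrac{\delta}{2}\nrm{\w_{k+1}-\w_k}^2.
\]
Since $\phi_k(\w_k)=\inner{\g_k}{\w_k}$, the assumed subproblem descent $\E\phi_k(\w_{k+1})\le\E\phi_k(\w_k)$ is exactly the bound $\E D_{\F}(\w_{k+1};\w_k)\le-\E\inner{\g_k}{\w_{k+1}-\w_k}-\frac{1}{2\eta}\E\nrm{\w_{k+1}-\w_k}^2$. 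Substituting this and using $\eta\le\frac{1}{4\delta}$ to absorb the $\frac{\delta}{2}$ term yields the central estimate
\[
\E[\f(\w_{k+1})-\f(\w_k)]\le\E\inner{\nabla\f(\w_k)-\g_k}{\w_{k+1}-\w_k}-\tfrac{3}{8\eta}\E\nrm{\w_{k+1}-\w_k}^2.
\]

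\textbf{Converting the step to a gradient.} Second, I would trade the $\nrm{\w_{k+1}-\w_k}^2$ term for a gradient-norm term. Using $\nabla\phi_k(\w_{k+1})=\g_k+\nabla\F(\w_{k+1})-\nabla\F(\w_k)+\frac{1}{\eta}(\w_{k+1}-\w_k)$ and $\nabla\F=\nabla\f-\nabla\h$ to solve for $\nabla\f(\w_{k+1})$, one obtains
\[
\nabla\f(\w_{k+1})=-\tfrac{1}{\eta}(\w_{k+1}-\w_k)+\big(\nabla\h(\w_{k+1})-\nabla\h(\w_k)\big)+\nabla\phi_k(\w_{k+1})-\big(\g_k-\nabla\f(\w_k)\big).
\]
Bounding $\nrm{\nabla\h(\w_{k+1})-\nabla\h(\w_k)}\le\delta\nrm{\w_{k+1}-\w_k}\le\frac{1}{4\eta}\nrm{\w_{k+1}-\w_k}$ and then invoking the inexactness criterion to replace $\E\nrm{\nabla\phi_k(\w_{k+1})}^2$ by $\frac{7}{16\eta^2}\E\nrm{\w_{k+1}-\w_k}^2+\frac{1}{8}\E\nrm{\nabla\f(\w_{k+1})}^2$, the $\frac18$ term moves to the left-hand side, leaving a relation of the shape $\E\nrm{\nabla\f(\w_{k+1})}^2\lesssim\frac{1}{\eta^2}\E\nrm{\w_{k+1}-\w_k}^2+(\text{noise})$. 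The specific constants $\frac{7}{16}$ and $\frac18$ in the hypothesis are precisely what make this absorption close while keeping the leading factor controlled; this lower-bounds $\nrm{\w_{k+1}-\w_k}^2$ by a constant multiple of $\eta^2\nrm{\nabla\f(\w_{k+1})}^2$, so the negative step term in the descent inequality becomes a negative gradient term that can be telescoped.

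\textbf{The main obstacle: the noise cross term.} Telescoping over $k=0,\dots,K-1$ and using $\E\f(\w_K)\ge\f^*$, everything reduces to handling $\E\inner{\nabla\f(\w_k)-\g_k}{\w_{k+1}-\w_k}$. This is the crux. In plain \textsc{SGD} the step is exactly $-\eta\g_k$, so this term is transparent; here $\w_{k+1}$ depends on $\g_k$ implicitly through the subproblem, so it does \emph{not} vanish in expectation, and a crude Young's-inequality bound would inflate the variance by the $\frac{1}{\eta^2}$ factor incurred when trading $\nrm{\w_{k+1}-\w_k}^2$ for gradient norms—enough to destroy the clean additive $8\sigma^2$. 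The resolution I would pursue is to substitute the implicit update $\w_{k+1}-\w_k=-\eta\g_k-\eta(\nabla\F(\w_{k+1})-\nabla\F(\w_k))+\eta\nabla\phi_k(\w_{k+1})$ \emph{directly} into the cross term. The leading piece then becomes $\eta\inner{\g_k-\nabla\f(\w_k)}{\g_k}$, whose conditional expectation collapses exactly to $\eta\,\E\nrm{\g_k-\nabla\f(\w_k)}^2\le\eta\sigma^2$ using $\E[\g_k\mid\w_k]=\nabla\f(\w_k)$, so the variance enters once and without amplification. The remaining pieces, involving $\nabla\F(\w_{k+1})-\nabla\F(\w_k)$ and $\nabla\phi_k(\w_{k+1})$, must be controlled using the criterion, the $\delta$-Lipschitzness of $\nabla\h$, and the already-established step-size estimates; the genuinely delicate point is that these terms measure the correlation between the noise and the change in gradient, which has to be charged carefully rather than bounded away. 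Carrying the constants through this cancellation—and checking that they collapse to exactly $\frac{48(\f(\w_0)-\f^*)}{\eta K}+8\sigma^2$—is the part of the argument I expect to require the most bookkeeping.
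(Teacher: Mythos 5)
Your first two steps are sound and in fact coincide with the paper's own argument: the descent estimate obtained from $\E\phi_k(\w_{k+1})\le\E\phi_k(\w_k)$ together with the $\delta$-smoothness of $\h$ is exactly the paper's first lemma (Lemma \ref{lem:non-convex-function-decrease}, up to the point where the noise term is handled), and your decomposition of $\nabla\f(\w_{k+1})$ into $-\frac1\eta(\w_{k+1}-\w_k)$, a $\nabla\h$-difference, $\nabla\phi_k(\w_{k+1})$, and the noise, combined with the inexactness criterion to lower-bound $\E\nrm{\w_{k+1}-\w_k}^2$ by a constant times $\eta^2\E\nrm{\nabla\f(\w_{k+1})}^2$ minus an $O(\eta^2\sigma^2)$ correction, is exactly the paper's second lemma (Lemma \ref{lem:non-convex-distance-lower-bound}).

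The gap is at what you yourself call the crux, and it is twofold. First, your reason for rejecting the simple bound is mistaken: Young's inequality does not inflate the variance by $1/\eta^2$, provided you apply it \emph{before} converting the step norm into a gradient norm and reserve part of the negative quadratic to absorb it. Concretely, $\E\inner{\nabla\f(\w_k)-\g_k}{\w_{k+1}-\w_k}\le 2\eta\sigma^2+\frac{1}{8\eta}\E\nrm{\w_{k+1}-\w_k}^2$; the quadratic is swallowed by your $-\frac{3}{8\eta}\E\nrm{\w_{k+1}-\w_k}^2$, leaving $-\frac{1}{4\eta}\E\nrm{\w_{k+1}-\w_k}^2+2\eta\sigma^2$, and only the surviving negative quadratic is then converted into $-\frac{\eta}{48}\E\nrm{\nabla\f(\w_{k+1})}^2$ plus another additive $O(\eta\sigma^2)$ term. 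The noise thus enters purely additively per step, and telescoping gives the claimed form; this ordering-plus-splitting is precisely the paper's proof. Second, the alternative you propose instead---substituting the implicit update into the cross term---is not carried out, and the pieces you defer as ``bookkeeping'' are genuinely problematic: the term $\eta\,\E\inner{\g_k-\nabla\f(\w_k)}{\nabla\F(\w_{k+1})-\nabla\F(\w_k)}$ cannot be controlled by any Lipschitz property of $\nabla\F$, since neither $\F$ nor $\f$ is assumed smooth (only $\h=\f-\F$ is); re-expanding $\nabla\F=\nabla\f-\nabla\h$ reintroduces $\nabla\f(\w_{k+1})-\nabla\f(\w_k)$, and taming its correlation with the noise forces you back into Young-type absorptions against the negative step and gradient terms---exactly the device you set out to avoid, now with worse constants. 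So as written the proposal does not close; the repair is to revert to the Young's-inequality route with the ordering described above.
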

We prove this in Appendix \ref{app:non-convex}.
Taking $\eta = \frac{1}{4\delta}$, this first term becomes $O(\delta(\f(\w_0) - \f^*) / K)$, which matches the rate of gradient descent on a $\delta$-smooth objective, paralleling our results in the convex setting. However, the noise term remains $\Omega(\sigma^2)$ regardless of $\eta$ or $K$. It is not clear whether this is merely a shortcoming in our analysis, or if this actually reflects the worst case performance of Algorithm \ref{alg:ISMD}. To our knowledge, a  satisfying analysis of stochastic proximal-point methods in the non-convex setting is lacking more generally, so this situation may not be unique to our method.

\subsection*{Acknowledgements}

We acknowledge support from the French government under the management of the Agence Nationale de la Recherche as part of the ``Investissements d'avenir'' program, reference ANR-19-P3IA-0001 (PRAIRIE 3IA Institute), as well as from the European Research Council (grant SEQUOIA 724063).

\bibliographystyle{icml2023}
\bibliography{biblio}
\newpage
\appendix
\onecolumn
\section{Proof of Theorem \ref{thm:strongly-convex}}\label{app:proof-of-thmstronglyconvex}

Our analysis requires that $h = \f - \F$ be differentiable under Assumption \ref{ass:similarity}, but when $\f$ is convex, it is not necessary for $\f$ itself to be differentiable. In particular, Assumption \ref{ass:noise} requires that the expectation of each stochastic gradient $\g_k$ must be a subgradient of $\f$ at $\w_k$. Accordingly, in the following proofs, we will use $\nabla \f(\w_k) := \E[\g_k\,|\,\w_k]$ as notation to denote the particular subgradient corresponding to the expectation of $\g_k$. Since $h = \f - \F$ is differentiable, we can use this, in turn, to define $\nabla \F(\w_k) := \nabla \f(\w_k) - \nabla \h(\w_k)$. Finally, since $0 \in \partial \f(\w^*)$, we denote $\nabla \f(\w^*) := 0$ even when $\f$ is not differentiable at its minimizer.

\begin{restatable}{lemma}{mainrecurrence}\label{lem:one-step-general}
Let $\w^* \in \argmin_{\w} \f(\w)$ and $\eta \leq \frac{1}{4\delta}$. Then under Assumptions \ref{ass:similarity} and \ref{ass:noise}, for any $\w_{k+1},\w_k$
\begin{align*}
\E[\f(\w_{k+1}) - \f^*] 
&\leq \frac{1}{2\eta}\E\nrm*{\w_k - \w^*}^2 - \E D_{\h}(\w^*;\w_k) \\
&\quad- \frac{1}{2\eta}\E\nrm*{\w_{k+1} - \w^*}^2 + \E D_{\h}(\w^*;\w_{k+1}) \\
&\quad+ \E\inner{\nabla \phi_k(\w_{k+1})}{\w_{k+1} - \w^*} + 2\eta\sigma^2 \\
&\quad- \frac{1}{4\eta}\E\nrm*{\w_{k+1} - \w_k}^2 - \E D_{\f}(\w^*;\w_{k+1}),
\end{align*}
where the expectation is taken over the randomness in $\g_k$.
\end{restatable}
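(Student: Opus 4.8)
The plan is to derive the exact one-step identity \eqref{eq:proof-outline-identity}, then take expectations and bound only two of its pieces: the noise term $(\bN)$ and the stray Bregman term $D_{\h}(\w_{k+1};\w_k)$ that appears in $(\bC)$. First I would differentiate the subproblem objective, $\nabla \phi_k(\w) = \g_k + \nabla\F(\w) - \nabla\F(\w_k) + \frac{1}{\eta}(\w - \w_k)$, and use the conventions $\nabla\f(\w_k) := \E[\g_k\mid\w_k]$ and $\nabla\F := \nabla\f - \nabla\h$ from the start of the appendix to rewrite this as \eqref{eq:nabla-phi_k}. Solving \eqref{eq:nabla-phi_k} for $\nabla\f(\w_{k+1})$ and substituting into the Bregman rearrangement \eqref{eq:bregman-identity-proof-sketch} expresses $\f(\w_{k+1}) - \f^*$ as a collection of inner products against $\w_{k+1} - \w^*$, together with the tail term $-D_{\f}(\w^*;\w_{k+1})$.

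The next step is to convert the two difference-of-gradient inner products into telescoping quantities via the three-point identity \eqref{eq:def-three-point-identity}. Applying it with potential $\h$ at $(\uu,\vv,\w) = (\w^*,\w_k,\w_{k+1})$ turns $\inner{\nabla\h(\w_{k+1}) - \nabla\h(\w_k)}{\w_{k+1} - \w^*}$ into $-D_{\h}(\w^*;\w_k) + D_{\h}(\w^*;\w_{k+1}) + D_{\h}(\w_{k+1};\w_k)$, and applying it with potential $\frac{1}{2}\nrm{\cdot}^2$ turns $-\frac{1}{\eta}\inner{\w_{k+1} - \w_k}{\w_{k+1} - \w^*}$ into $\frac{1}{2\eta}\nrm{\w_k - \w^*}^2 - \frac{1}{2\eta}\nrm{\w_{k+1} - \w^*}^2 - \frac{1}{2\eta}\nrm{\w_{k+1} - \w_k}^2$. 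This produces precisely the grouped identity \eqref{eq:proof-outline-identity}, with the $(\bT)$, $(\bN)$, $(\bI)$, and $(\bC)$ blocks; note that the three-point identity needs no convexity, so this step is valid even when $\f$ is only subdifferentiable.

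Finally I would take the expectation over $\g_k$. The $(\bN)$ block is the crux: since $\w_{k+1}$ is generated from the $\g_k$-dependent subproblem $\phi_k$, one cannot simply discard $\inner{\nabla\f(\w_k) - \g_k}{\w_{k+1} - \w^*}$. The resolution is to split $\w_{k+1} - \w^* = (\w_k - \w^*) + (\w_{k+1} - \w_k)$; the first piece is fixed given $\w_k$, so its contribution vanishes in expectation by unbiasedness in Assumption \ref{ass:noise}, while the second piece is handled by Young's inequality, $\inner{\nabla\f(\w_k) - \g_k}{\w_{k+1} - \w_k} \leq 2\eta\nrm{\nabla\f(\w_k) - \g_k}^2 + \frac{1}{8\eta}\nrm{\w_{k+1} - \w_k}^2$, whose expectation is at most $2\eta\sigma^2 + \frac{1}{8\eta}\E\nrm{\w_{k+1} - \w_k}^2$. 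For the leftover $D_{\h}(\w_{k+1};\w_k)$ in $(\bC)$, Assumption \ref{ass:similarity} gives $D_{\h}(\w_{k+1};\w_k) \leq \frac{\delta}{2}\nrm{\w_{k+1} - \w_k}^2 \leq \frac{1}{8\eta}\nrm{\w_{k+1} - \w_k}^2$ by the stepsize budget $\eta \leq \frac{1}{4\delta}$. Summing the three $\nrm{\w_{k+1} - \w_k}^2$ coefficients, namely $-\frac{1}{2\eta}$ from the regularization in $(\bC)$, $+\frac{1}{8\eta}$ from the noise, and $+\frac{1}{8\eta}$ from $D_{\h}$, yields the net $-\frac{1}{4\eta}$, and a final application of the tower rule over all randomness gives the claimed bound. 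The main obstacle is exactly this treatment of $(\bN)$: the $(\w_k - \w^*)$ versus $(\w_{k+1} - \w_k)$ decomposition is what lets unbiasedness annihilate the dangerous $\nrm{\w_{k+1} - \w^*}$-scale term while confining the variance penalty to a $\nrm{\w_{k+1} - \w_k}^2$ term small enough to be absorbed by the similarity budget.
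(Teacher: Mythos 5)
Your proposal is correct and follows essentially the same route as the paper's proof: expressing $\nabla\f(\w_{k+1})$ through $\nabla\phi_k(\w_{k+1})$, substituting into the Bregman rearrangement, applying the three-point identity, using unbiasedness to replace $\w_{k+1}-\w^*$ by $\w_{k+1}-\w_k$ in the noise term, and absorbing both the Young's-inequality penalty and $D_{\h}(\w_{k+1};\w_k)$ into a $\frac{1}{4\eta}\nrm{\w_{k+1}-\w_k}^2$ budget via $\eta\leq\frac{1}{4\delta}$. The only cosmetic difference is the split of Young's constants (you use $2\eta$ and $\frac{1}{8\eta}+\frac{1}{8\eta}$, the paper uses $\frac{\eta}{1-2\eta\delta}$ and $\frac{1-2\eta\delta}{4\eta}+\frac{\delta}{2}$), which yields the identical final bound.
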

% \mainrecurrence*
\begin{proof}
Computing $\nabla \phi_k$ and rearranging yields
\begin{align}
\nabla \phi_k(\w_{k+1})
&= \nabla \F(\w_{k+1}) + \g_k - \nabla \F(\w_k) + \frac{1}{\eta}(\w_{k+1} - \w_k) \notag \\ 
\implies \nabla \f(\w_{k+1}) &= \nabla \h(\w_{k+1}) - \nabla \h(\w_k)  + \nabla \phi_k(\w_{k+1}) + \nabla \f(\w_k) - \g_k - \frac{1}{\eta}(\w_{k+1} - \w_k).\label{eq:expression-for-nablaLk}
\end{align}
Also, \eqref{eq:def-bregman-divergence} immediately implies
\begin{equation}\label{eq:bregman-rearrangement-thing}
\f(\w_{k+1}) - \f^* = \inner{\nabla \f(\w_{k+1})}{\w_{k+1} - \w^*} - D_{\f}(\w^*;\w_{k+1}).
\end{equation}
After substituting \eqref{eq:expression-for-nablaLk} into \eqref{eq:bregman-rearrangement-thing}, we derive
\begin{align}
&\f(\w_{k+1}) - \f^* \nonumber\\
&= \inner{\nabla \f(\w_{k+1})}{\w_{k+1} - \w^*} - D_{\f}(\w^*;\w_{k+1}) \notag \\
&= \inner{\nabla \h(\w_{k+1}) - \nabla \h(\w_k)  + \nabla \phi_k(\w_{k+1}) + \nabla \f(\w_k) - \g_k + \frac{1}{\eta}(\w_k - \w_{k+1})}{\w_{k+1} - \w^*} - D_{\f}(\w^*;\w_{k+1}) \notag \\
&= D_{\h}(\w^*;\w_{k+1}) - D_{\h}(\w^*;\w_k) + D_{\h}(\w_{k+1};\w_k) + \inner{\nabla \phi_k(\w_{k+1}) + \nabla \f(\w_k) - \g_k}{\w_{k+1} - \w^*}\nonumber\\
&\qquad\qquad+ \frac{1}{2\eta}\nrm*{\w_k - \w^*}^2 - \frac{1}{2\eta}\nrm*{\w_{k+1} - \w^*}^2 - \frac{1}{2\eta}\nrm*{\w_{k+1} - \w_k}^2 - D_{\f}(\w^*;\w_{k+1}), \label{eq:lemma-proof-penultimate}
\end{align}
where the third equality uses the three-point property with $D_h$, \eqref{eq:def-three-point-identity}. To proceed, Assumption \ref{ass:noise}, Young's inequality, and $\eta \leq \frac{1}{4\delta}$, together imply
\begin{align*}
\E\inner{\nabla \f(\w_k) - \g_k}{\w_{k+1} - \w^*}
&= \E\inner{\nabla \f(\w_k) - \g_k}{\w_{k+1} - \w_k} \\
&\leq \frac{\eta\E\nrm{\nabla \f(\w_k) - \g_k}^2}{1-2\eta\delta} + \frac{1-2\eta\delta}{4\eta}\nrm{\w_{k+1} - \w_k}^2 \\
&\leq 2\eta\sigma^2 + \frac{1-2\eta\delta}{4\eta}\nrm{\w_{k+1} - \w_k}^2,
\end{align*}
where the expectation is taken over the randomness in $\g_k$. Since $h$ is $\delta$-smooth, in light of \eqref{eq:smoothness-inequality}, $\abs{D_h(\w;\w_k)} \leq \frac{\delta}{2}\nrm{\w - \w_k}^2$, so 
\begin{equation*}
\E\brk*{D_h(\w_{k+1};\w_k) + \inner{\nabla \f(\w_k) - \g_k}{\w_{k+1} - \w^*}}
\leq 2\eta\sigma^2 + \frac{1}{4\eta}\E\nrm{\w_{k+1}-\w_k}^2.
\end{equation*}
Taking the expectation of \eqref{eq:lemma-proof-penultimate} and plugging this in
completes the proof.
\end{proof}

\stronglyconvexthm*
\begin{proof}
For brevity, denote $\Delta_k := \frac{1}{2\eta}\nrm{\w_k - \w^*}^2 - D_{\h}(\w^*;\w_k)$. Because $\f$ is strongly convex, it follows immediately from \eqref{eq:def-strong-convexity} and \eqref{eq:def-bregman-divergence} that $D_{\f}(\w^*;\w_{k+1}) \geq \frac{\mu}{2}\nrm{\w_{k+1} - \w^*}^2$.
Also, in light of \eqref{eq:smoothness-inequality}, it follows under Assumption \ref{ass:similarity} that 
\begin{equation*}
\abs{D_{\h}(\w^*;\w_{k+1})} \leq \frac{\delta}{2}\nrm{\w_{k+1} - \w^*}^2 \leq \frac{1}{8\eta}\nrm{\w_{k+1} - \w^*}^2.
\end{equation*}
This implies that 
\begin{equation*}
0 \leq \Delta_{k+1} \leq \frac{5}{8\eta}\nrm{\w_{k+1} - \w^*}^2 \leq \frac{5}{4\eta\mu}D_L(\w^*;\w_{k+1}).
\end{equation*}

Therefore, by Lemma \ref{lem:one-step-general} and Young's inequality
\begin{align}
&\E[\f(\w_{k+1}) - \f^*] \nonumber\\
&\leq \E\Delta_k - \E\Delta_{k+1} + 2\eta\sigma^2 + \E\inner{\nabla \phi_k(\w_{k+1})}{\w_{k+1} - \w^*}  - \frac{1}{4\eta}\E\nrm*{\w_{k+1} - \w_k}^2 - \E D_{\f}(\w^*;\w_{k+1}) \notag \\
&\leq \E\Delta_k - \prn*{1 + \frac{2\eta\mu}{5}}\E\Delta_{k+1} + 2\eta\sigma^2 + \E\inner{\nabla \phi_k(\w_{k+1})}{\w_{k+1} - \w^*} - \frac{1}{4\eta}\E\nrm*{\w_{k+1} - \w_k}^2 - \frac{\mu}{4}\E\nrm{\w_{k+1} - \w^*}^2. \label{eq:nabla-phi-cancellation}
\end{align}
So by Young's inequality and the assumed upper bound
\begin{equation}
\nrm{\nabla\phi_k(\w_{k+1})}^2 \leq \frac{\mu}{4\eta}\E\nrm*{\w_{k+1} - \w_k}^2 + G^2
\end{equation}
we conclude
\begin{align}
\E\inner{\nabla \phi_k(\w_{k+1})}{\w_{k+1} - \w^*}
&\leq \frac{1}{\mu}\E\nrm*{\nabla \phi_k(\w_{k+1})} + \frac{\mu}{4}\E\nrm{\w_{k+1} - \w^*}^2 \\
&\leq \frac{1}{4\eta}\E\nrm*{\w_{k+1} - \w_k}^2 + \frac{\mu}{4}\E\nrm{\w_{k+1} - \w^*}^2 + \frac{G^2}{\mu}.
\end{align}
Plugging this into \eqref{eq:nabla-phi-cancellation} yields
\begin{equation}\label{eq:strongly-convex-descent}
\E[\f(\w_{k+1}) - \f^*] 
\leq \E\Delta_k - \prn*{1 + \frac{2\eta\mu}{5}}\E\Delta_{k+1} + 2\eta\sigma^2 + \frac{G^2}{\mu} .
\end{equation}

As is typical in strongly convex stochastic optimization, we now introduce weights $\alpha_k = \prn{1 + \frac{2\eta\mu}{5}}^{k-1}$ \citep[see Section 5 in][for a similar weighting]{nesterov2008confidence} with sum $A_K := \sum_{k=1}^K \alpha_k$ and we will attempt to upper bound the suboptimality of the weighted iterate 
\begin{equation}\label{eq:weighted-iterate}
\bar{\w}_K := \frac{1}{A_K}\sum_{k=1}^K\alpha_k \w_k.
\end{equation}
By the convexity of $\f$ and \eqref{eq:strongly-convex-descent},
\begin{align*}
\E \f(\bar{\w}_K) - \f^* 
&\leq \frac{1}{A_K}\sum_{k=1}^K\alpha_k\E[\f(\w_k) - \f^*] \\
&\leq \frac{\E\Delta_0}{A_K} + 2\eta\sigma^2 + \frac{G^2}{\mu} \\
&\leq \frac{5\E\nrm{\w_0 - \w^*}^2}{8\eta A_K} + 2\eta\sigma^2 + \frac{G^2}{\mu}.
\end{align*}
Finally, we lower bound
\begin{equation*}
A_K = \frac{5\prn*{\prn*{1 + \frac{2\eta\mu}{5}}\smash{^{K}} - 1}}{2\eta\mu} \geq \prn*{1 + \frac{2\eta\mu}{5}}^{K-1} ,
\end{equation*}
which completes the proof.
\end{proof}

\section{Proof of Theorem \ref{thm:convex}}\label{app:proof-of-thmconvex}
\thmconvex*
\begin{proof}
By construction, $\f^{(\mu)}$ is $\mu$-strongly convex, and yet $\f^{(\mu)} - \F^{(\mu)} = h$, so this difference remains $\delta$-smooth. Therefore, our analysis from Theorem \ref{thm:strongly-convex} can be applied to $\f^{(\mu)}$. Denote $\w^*_{\mu} := \argmin_\w \f^{(\mu)}(\w)$. In the course of proving Theorem \ref{thm:strongly-convex}, we showed in \eqref{eq:strongly-convex-descent} that
\begin{align*}
\E[\f^{(\mu)}(\w_{k+1}) - \f^{(\mu)*}] 
&\leq \E\Delta_k - \prn*{1 + \frac{2\eta\mu}{5}}\E\Delta_{k+1} + 2\eta\sigma^2 + \frac{G^2}{\mu}  \\
&\leq \E\Delta_k - \E\Delta_{k+1} + 2\eta\sigma^2 + \frac{G^2}{\mu},
\end{align*}
where $\Delta_k := \frac{1}{2\eta}\nrm{\w_k - \w^*_\mu}^2 - D_h(\w^*_\mu;\w_k)$ and $0 \leq \Delta_k \leq \frac{5}{8\eta}\nrm{\w_k - \w^*_\mu}^2$ for all $k$. Furthermore, $\f^{(\mu)}(\w_{k+1}) \geq \f(\w_{k+1})$ and
\begin{equation*}
\f^{(\mu)*} \leq \f^{(\mu)}(\w^*) = \f^* + \frac{\mu}{2}\nrm{\w_0 - \w^*}^2.
\end{equation*}
Therefore, by the convexity of $\f$
\begin{align}
\E\f\prn*{\frac{1}{K}\sum_{k=1}^K \w_k} - \f^* 
&\leq \frac{1}{K}\sum_{k=1}^K\E\brk*{\f(\w_k) - \f^*} \notag \\
&\leq \frac{1}{K}\sum_{k=1}^K\E\brk*{\f^{(\mu)}(\w_k) - \f^{(\mu)*} + \frac{\mu}{2}\nrm{\w_0 - \w^*}^2} \notag \\
&\leq \frac{\E\Delta_0}{K} + 2\eta\sigma^2 + \frac{G^2}{\mu} + \frac{\mu}{2}\E\nrm{\w_0 - \w^*}^2. \label{eq:convex-penultimate-step}
\end{align}
Finally,
\begin{equation*}
0 = \nabla \f^{(\mu)}(\w^*_\mu) = \nabla \f(\w^*_\mu) + \mu(\w^*_\mu - \w_0),
\end{equation*}
so,
\begin{align*}
\nrm*{\w_0 - \w^*_\mu}^2 - \nrm*{\w_0 - \w^*}^2
&= -\nrm*{\w^* - \w^*_\mu}^2 + 2\inner{\w_0 - \w^*_\mu}{\w^* - \w^*_\mu} \\
&= -\nrm*{\w^* - \w^*_\mu}^2 - \frac{2}{\mu}\inner{\nabla \f(\w^*_\mu)}{\w^*_\mu - \w^*} \\
&\leq 0 .
\end{align*}
where for the last line we used the convexity of $\f$. So,
\begin{equation*}
\Delta_0 \leq \frac{5}{8\eta}\nrm{\w_0 - \w^*_\mu}^2
\leq \frac{5}{8\eta}\nrm{\w_0 - \w^*}^2.
\end{equation*}
When combined with \eqref{eq:convex-penultimate-step}, this means
\begin{align*}
\E\f\prn*{\frac{1}{K}\sum_{k=1}^K \w_k} - \f^* 
&\leq \frac{5\E \nrm{\w_0 - \w^*}^2}{8\eta K} + 2\eta\sigma^2 + \frac{G^2}{\mu} + \frac{\mu}{2}\E\nrm{\w_0 - \w^*}^2,
\end{align*}
and plugging in our choice of $\mu = \frac{1}{\eta K}$ completes the proof.
\end{proof}

\section{Proof of Corollary \ref{cor:optimized-guarantees}}\label{app:corollary}

\stepsizecorollary*
\begin{proof}
The complexity guarantee in the convex case follows immediately after plugging the chosen $\eta$ into the guarantee of Theorem \ref{thm:convex} and then choosing $K$ large enough that the expected suboptimality is less than $\varepsilon$.

Moving on to the strongly convex setting, to apply Theorem \ref{thm:strongly-convex}, we require $\eta \leq \frac{1}{4\delta}$. Suppose for now that $\eta$ also satisfies
\begin{equation*}
\eta \geq \frac{1}{2\mu (K-1)}.
\end{equation*}
Then the first term in the guarantee from Theorem \ref{thm:strongly-convex} is at most
\begin{equation*}
\frac{5B^2}{8\eta}\prn*{1+\frac{2\eta\mu}{5}}^{1-K} 
\leq \frac{5B^2\mu(K-1)}{4}\prn*{1+\frac{2\eta\mu}{5}}^{1-K}.
\end{equation*}
Next, we note that for $g(z) = \ln(1 + z)$, and any $z \geq 0$
\begin{equation*}
g'(z) = \frac{1}{1+z} = 1 - \frac{z}{1+z} \geq 1 - z.
\end{equation*}
Therefore, for $z \geq 0$
\begin{equation*}
\ln(1+z) = \int_0^z g'(t) dt \geq \int_0^z (1 - t) dt = z - \frac{1}{2}z^2.
\end{equation*}
Thus,
\begin{equation*}
\prn*{1+\frac{2\eta\mu}{5}}^{1-K}
= \exp\prn*{-(K-1)\ln\prn*{1+\frac{2\eta\mu}{5}}}
\leq \exp\prn*{-\frac{2(K-1)\eta\mu}{5}\prn*{1 - \frac{\eta\mu}{5}}}
\end{equation*}
therefore, as long as we choose $\eta \leq \frac{5}{2\mu}$, then it follows that
\begin{equation*}
\frac{5B^2}{8\eta}\prn*{1+\frac{2\eta\mu}{5}}^{1-K}
\leq \frac{5B^2\mu(K-1)}{4}\exp\prn*{-\frac{(K-1)\eta\mu}{5}}.
\end{equation*}
Therefore, if all of the following constraints on $\eta$ are satisfied:
\begin{gather}
\eta \leq \frac{1}{4\delta} \label{eq:cor-delta-term}\\
\eta \leq \frac{5}{2\mu} \label{eq:cor-lambda-term}\\
\eta \geq \frac{1}{2\mu(K-1)} \label{eq:cor-no-log-term}\\
\eta \geq \frac{5}{\mu (K-1)}\ln\prn*{\frac{5B^2\mu (K-1)}{4\varepsilon}} \label{eq:cor-log-term}
\end{gather}
then this first term in the guarantee from Theorem \ref{thm:strongly-convex} is upper bounded by $\varepsilon$. We therefore set
\begin{equation}\label{eq:cor-sc-stepsize}
\eta = \frac{5}{\mu (K-1)}\prn*{1 + \ln\prn*{\frac{5B^2\mu (K-1)}{4\varepsilon}}}.
\end{equation}
This value always satisfies \eqref{eq:cor-no-log-term} and \eqref{eq:cor-log-term}, and we will proceed to choose $K$ large enough that it also satisfies \eqref{eq:cor-delta-term} and \eqref{eq:cor-lambda-term}. In particular, we require $K$ such that 
\begin{equation} \label{eq:cor-first-term-K}
K \geq 1 + 2\max\crl*{1,\,\frac{10\delta}{\mu}}\prn*{1 + \ln\prn*{\frac{5B^2\mu (K-1)}{4\varepsilon}}}.
\end{equation}
Plugging the stepsize \eqref{eq:cor-sc-stepsize} into the second term of the guarantee from Theorem \ref{thm:strongly-convex} yields
\begin{equation*}
2\eta\sigma^2 = \frac{10\sigma^2}{\mu (K-1)}\prn*{1 + \ln\prn*{\frac{5B^2\mu (K-1)}{4\varepsilon}}}.
\end{equation*}
So, to make this smaller than $\varepsilon$, we require
\begin{equation*}
K \geq 1 + \frac{10\sigma^2}{\mu \varepsilon}\prn*{1 + \ln\prn*{\frac{5B^2\mu (K-1)}{4\varepsilon}}}.
\end{equation*}
Combining this with \eqref{eq:cor-first-term-K}, we conclude that with $\eta$ set according to \eqref{eq:cor-sc-stepsize}, if 
\begin{equation}\label{eq:cor-total-K-req}
K \geq 10\max\crl*{1,\,\frac{\delta}{\mu},\,\frac{\sigma^2}{\mu \varepsilon}}\prn*{1 + \ln\prn*{\frac{5B^2\mu K}{4\varepsilon}}}
\end{equation}
and, in addition $G \leq \sqrt{\mu\varepsilon}$, then the expected suboptimality is at most $3\varepsilon$, so using $\varepsilon' = \varepsilon / 3$ instead above ensures accuracy $\varepsilon$.

To conclude the proof, we note for $A,B > 0$, the inequality 
\begin{equation*}
X \geq A(1 + \ln(BX))
\end{equation*}
is satisfied by setting
\begin{equation*}
X \geq A(1 + 4\ln(e + AB)).
\end{equation*}
To show this, we first note that because $z\mapsto\sqrt{z}$ is a concave function on $z > 0$, we have the inequality
\begin{equation*}
\sqrt{z} \leq \sqrt{1} + (z-1)\prn*{\left.\frac{d}{dz}\sqrt{z}\right|_{z=1}} = 1 + \frac{z-1}{2} = \frac{1+z}{2}.
\end{equation*}
This implies that for $z > 0$ 
\begin{equation}
\frac{d}{dz}\sqrt{z} = \frac{1}{2\sqrt{z}} \geq \frac{1}{1+z} = \frac{d}{dz}\ln(1+z),
\end{equation}
Finally, since $\sqrt{0} = \ln(1+0)$, this implies that $\ln(1+z) \leq \sqrt{z}$ for all $z\geq 0$.

So, suppose that
\begin{equation}
X = \alpha A(1 + 4\ln(e + AB)).
\end{equation}
for some $\alpha \geq 1$. Then,
\begin{align*}
A(1+\ln(BX))
&= A(1 + \ln(\alpha) + \ln(AB) + \ln(1 + 4\ln(e + AB))) \\
&\leq A(1 + \ln(\alpha) + \ln(e+AB) + \sqrt{4\ln(e + AB)}) \\
&\leq A(1 + \ln(\alpha) + 3\ln(e+AB)) \\
&\leq \frac{1}{\alpha}X + \ln(c)A \\
&\leq \frac{1 + \ln(\alpha)}{\alpha}X \\
&\leq X,
\end{align*}
where we used that $1 + \ln(\alpha) \leq \alpha$ for all $\alpha \geq 1$.

Therefore, for a sufficiently large constant $c$, setting
\begin{equation*}
K = c\cdot\prn*{1 + \frac{\delta}{\mu} + \frac{\sigma^2}{\mu\varepsilon}}\ln\prn*{e + \prn*{1 + \frac{\delta}{\mu} + \frac{\sigma^2}{\mu\varepsilon}}\frac{\mu\E\nrm{\w_0 - \w^*}^2}{\varepsilon}}
\end{equation*}
satisfies \eqref{eq:cor-total-K-req}, completing the proof.
\end{proof}

\section{Proof of Theorem \ref{thm:non-convex}}\label{app:non-convex}

\begin{lemma}\label{lem:non-convex-function-decrease}
Under Assumptions \ref{ass:similarity} and \ref{ass:noise}, let $\eta \leq \frac{1}{4\delta}$. Then for any $\w$ such that $\E\phi_k(\w) \leq \E\phi_k(\w_k)$,
\[
\E\brk*{\f(\w) - \f(\w_k)}
\leq -\frac{1}{4\eta}\E\nrm*{\w - \w_k}^2 + 2\eta\sigma^2.
\]
\end{lemma}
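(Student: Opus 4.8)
The plan is to convert the descent hypothesis $\E\phi_k(\w) \le \E\phi_k(\w_k)$ into the claimed bound on $\E[\f(\w) - \f(\w_k)]$ by comparing $\phi_k$, which is built from the proxy $\F$ and the stochastic gradient $\g_k$, against $\f$ itself. First I would split $\f = \F + \h$ and bound the proxy part by the Bregman divergence and the remainder by $\delta$-smoothness. Writing $\F(\w) - \F(\w_k) = D_{\F}(\w;\w_k) + \inner{\nabla\F(\w_k)}{\w - \w_k}$ and applying Assumption \ref{ass:similarity} to $\h$ in the form of \eqref{eq:smoothness-inequality}, together with $\nabla\F(\w_k) + \nabla\h(\w_k) = \nabla\f(\w_k)$, I get
\[
\f(\w) - \f(\w_k) \le D_{\F}(\w;\w_k) + \inner{\nabla\f(\w_k)}{\w - \w_k} + \frac{\delta}{2}\nrm*{\w - \w_k}^2.
\]

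Second, I would exploit the descent condition. Since the divergence and quadratic terms of $\phi_k$ vanish at $\w_k$, we have $\phi_k(\w_k) = \inner{\g_k}{\w_k}$, so the hypothesis $\E\phi_k(\w) \le \E\phi_k(\w_k)$ rearranges to
\[
\E\brk*{\inner{\g_k}{\w - \w_k} + D_{\F}(\w;\w_k)} \le -\frac{1}{2\eta}\E\nrm*{\w - \w_k}^2.
\]
To connect the two displays I would rewrite $\inner{\nabla\f(\w_k)}{\w - \w_k} = \inner{\g_k}{\w - \w_k} - \inner{\g_k - \nabla\f(\w_k)}{\w - \w_k}$. Taking expectations, the $\inner{\g_k}{\w-\w_k}$ and $D_{\F}$ terms combine into exactly the left-hand side above, contributing $-\frac{1}{2\eta}\E\nrm{\w-\w_k}^2$, and the smoothness term contributes $\frac{\delta}{2}\E\nrm{\w-\w_k}^2$.

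The main obstacle, and the only genuinely delicate step, is the noise cross-term $-\E\inner{\g_k - \nabla\f(\w_k)}{\w - \w_k}$: I cannot argue it vanishes by unbiasedness, because $\w = \w_{k+1}$ is computed from $\g_k$ and is therefore correlated with it. Instead I would bound it with Young's inequality using parameter $4\eta$, which via the variance bound in Assumption \ref{ass:noise} gives $-\E\inner{\g_k - \nabla\f(\w_k)}{\w - \w_k} \le 2\eta\sigma^2 + \frac{1}{8\eta}\E\nrm{\w - \w_k}^2$. Finally the constant bookkeeping must be tight: $\eta \le \frac{1}{4\delta}$ yields $\frac{\delta}{2} \le \frac{1}{8\eta}$, so the three quadratic contributions $-\frac{1}{2\eta}$, $+\frac{1}{8\eta}$ (smoothness) and $+\frac{1}{8\eta}$ (Young) sum to precisely $-\frac{1}{4\eta}$, while the noise contributes $2\eta\sigma^2$, matching the statement exactly.
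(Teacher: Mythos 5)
Your proof is correct and follows essentially the same route as the paper's: both use the $\delta$-smoothness of $\h = \f - \F$ to pass from $\f$ to the proxy, absorb the $\F$ and quadratic terms into $\phi_k(\w) - \phi_k(\w_k)$ so the descent hypothesis can be applied, and handle the noise cross-term with Young's inequality (correctly noting it cannot vanish by unbiasedness since $\w$ is correlated with $\g_k$). The only difference is cosmetic bookkeeping: the paper splits Young's inequality with $\delta$-dependent weights $\frac{\eta}{1-2\eta\delta}$ and $\frac{1-2\eta\delta}{4\eta}$, whereas you use the fixed weights $2\eta$ and $\frac{1}{8\eta}$; both yield exactly the $-\frac{1}{4\eta}$ coefficient and the $2\eta\sigma^2$ noise term.
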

\begin{proof}
Under Assumption \ref{ass:similarity}, $h = \f - \F$ is $\delta$-smooth. So,
\begin{equation*}
\f(\w) - \F(\w)
\leq \f(\w_k) - \F(\w_k) + \frac{\delta}{2}\nrm*{\w - \w_k}^2 
+ \inner{\nabla \f(\w_k) - \nabla \F(\w_k)}{\w - \w_k}   .
\end{equation*}
Rearranging and substituting $\phi_k$, this implies
\begin{equation*}
\f(\w) - \f(\w_k)
\leq \phi_k(\w) - \phi_k(\w_k) - \frac{1-\eta\delta}{2\eta}\nrm*{\w - \w_k}^2 
+ \inner{\nabla \f(\w_k) - \g_k}{\w - \w_k} .  
\end{equation*}
Next we use Young's inequality with Assumption \ref{ass:noise} to upper bound
\begin{equation*}
\E \inner{\nabla \f(\w_k) - \g_k}{\w - \w_k}  
\leq \frac{\eta\E\nrm{\nabla \f(\w_k) - \g_k}^2}{1-2\eta\delta} + \frac{1-2\eta\delta}{4\eta}\E\nrm{\w - \w_k}^2 \leq \frac{\eta\sigma^2}{1-2\eta\delta} + \frac{1-2\eta\delta}{4\eta}\E\nrm{\w - \w_k}^2 .
\end{equation*}
The fact that $\eta \leq \frac{1}{4\delta}$ and $\E\phi_k(\w) \leq \E\phi_k(\w_k)$ completes the proof.
\end{proof}

\begin{lemma}\label{lem:non-convex-distance-lower-bound}
Under Assumptions \ref{ass:similarity} and \ref{ass:noise}, let $\f$ be differentiable and let $\eta \leq \frac{1}{4\delta}$. If $\w_{k+1}$ satisfies
\[
\E\nrm{\phi_k(\w_{k+1})}^2 
\leq \frac{7}{16\eta^2}\E\nrm{\w_{k+1} - \w_k}^2 + \frac{1}{8}\nrm{\nabla \f(\w_{k+1})}^2
\]
then
\[
-\frac{1}{4\eta}\nrm{\w_{k+1} - \w_k}^2 
\leq -\frac{\eta}{48}\E\nrm*{\nabla \f(\w_{k+1})}^2 + \frac{\eta\sigma^2}{6}.
\]
\end{lemma}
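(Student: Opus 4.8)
The plan is to lower-bound $\E\nrm{\w_{k+1}-\w_k}^2$ in terms of $\E\nrm{\nabla\f(\w_{k+1})}^2$, since dividing such a bound by $-4\eta$ (and flipping the inequality) immediately yields the claim; I read the hypothesis as a bound on $\E\nrm{\nabla\phi_k(\w_{k+1})}^2$ and treat both sides of the conclusion as expectations. The starting point is the same rearrangement of $\nabla\phi_k$ used in Lemma \ref{lem:one-step-general}, namely \eqref{eq:expression-for-nablaLk}:
\[
\nabla\f(\w_{k+1}) = \nabla\phi_k(\w_{k+1}) + \bigl(\nabla\h(\w_{k+1})-\nabla\h(\w_k)\bigr) + \bigl(\nabla\f(\w_k)-\g_k\bigr) - \tfrac{1}{\eta}(\w_{k+1}-\w_k).
\]
This isolates four contributions: the (small) subproblem gradient, a Hessian-similarity term, the gradient noise, and the proximal step direction.

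Next I would bound each piece by norm inequalities, deliberately avoiding any cancellation of cross terms via unbiasedness (which is unavailable here because $\w_{k+1}$ correlates with $\g_k$). Writing $d := \w_{k+1}-\w_k$, Assumption \ref{ass:similarity} gives $\nrm{\nabla\h(\w_{k+1})-\nabla\h(\w_k)} \le \delta\nrm{d}$, and since $\eta \le \tfrac{1}{4\delta}$ this is at most $\tfrac{1}{4\eta}\nrm{d}$; combined with the $\tfrac{1}{\eta}\nrm{d}$ from the proximal term, the two deterministic pieces contribute at most $\tfrac{5}{4\eta}\nrm{d}$. Applying the triangle inequality and then $(\alpha+\beta+\gamma)^2 \le 3(\alpha^2+\beta^2+\gamma^2)$, and taking expectations with $\E\nrm{\nabla\f(\w_k)-\g_k}^2\le\sigma^2$ from Assumption \ref{ass:noise}, yields
\[
\E\nrm{\nabla\f(\w_{k+1})}^2 \le 3\,\E\nrm{\nabla\phi_k(\w_{k+1})}^2 + 3\sigma^2 + \tfrac{75}{16\eta^2}\E\nrm{d}^2.
\]

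The crux is then to substitute the inexactness criterion $\E\nrm{\nabla\phi_k(\w_{k+1})}^2 \le \tfrac{7}{16\eta^2}\E\nrm{d}^2 + \tfrac{1}{8}\E\nrm{\nabla\f(\w_{k+1})}^2$. This absorbs the subproblem-gradient term and produces a $\tfrac{3}{8}\E\nrm{\nabla\f(\w_{k+1})}^2$ on the right, which—since $\tfrac38<1$—can be moved to the left. Collecting the $\E\nrm{d}^2$ coefficients gives $\tfrac{3\cdot 7 + 75}{16\eta^2}=\tfrac{6}{\eta^2}$, so one arrives at $\tfrac{5}{8}\E\nrm{\nabla\f(\w_{k+1})}^2 \le \tfrac{6}{\eta^2}\E\nrm{d}^2 + 3\sigma^2$. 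Rearranging yields $\E\nrm{d}^2 \ge \tfrac{5\eta^2}{48}\E\nrm{\nabla\f(\w_{k+1})}^2 - \tfrac{\eta^2\sigma^2}{2}$, and multiplying by $-\tfrac{1}{4\eta}$ produces $-\tfrac{1}{4\eta}\E\nrm{d}^2 \le -\tfrac{5\eta}{192}\E\nrm{\nabla\f(\w_{k+1})}^2 + \tfrac{\eta\sigma^2}{8}$, which is stronger than the stated bound because $\tfrac{5}{192}\ge\tfrac{1}{48}$ and $\tfrac{1}{8}\le\tfrac{1}{6}$.

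The main obstacle I anticipate is this self-bounding step: because the inexactness criterion carries $\nrm{\nabla\f(\w_{k+1})}^2$ on its right-hand side—the very quantity being controlled—one must ensure that after the triangle and Young expansions its accumulated coefficient stays strictly below $1$. The constants $\tfrac{7}{16}$ and $\tfrac18$ in the hypothesis are evidently tuned so that the factor-$3$ expansion leaves $\tfrac38<1$; a looser expansion would break the argument. Thus the only real care needed is the bookkeeping of constants rather than any conceptual difficulty.
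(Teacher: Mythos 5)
Your proposal is correct and takes essentially the same approach as the paper's proof: both start from the same rearrangement of $\nabla \phi_k(\w_{k+1})$, apply a relaxed triangle inequality to split $\E\nrm{\nabla \f(\w_{k+1})}^2$ into subproblem-gradient, similarity, noise, and step contributions (the paper keeps four separate terms with a factor of $4$; you group the $\delta$-term with the step term and use a factor of $3$), bound these via Assumptions \ref{ass:similarity} and \ref{ass:noise}, and then absorb the self-referencing $\frac{1}{8}\E\nrm{\nabla \f(\w_{k+1})}^2$ coming from the inexactness criterion. Your bookkeeping even yields slightly sharper constants ($\frac{5\eta}{192}$ and $\frac{\eta\sigma^2}{8}$), which indeed imply the stated bound.
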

\begin{proof}
By the relaxed triangle inequality, for any vectors $a,b,c,d$, 
\begin{equation}\label{eq:relaxed-triangle-inequality}
\nrm{a}^2 = \nrm{a+b+c+d-b-c-d}^2 
\leq 4(\nrm{a+b+c+d}^2 + \nrm{b}^2 + \nrm{c}^2 + \nrm{d}^2).
\end{equation}
Furthermore,
\begin{equation}
\nabla \phi_k(\w_{k+1}) = \nabla \F(\w_{k+1}) - \nabla \f(\w_{k+1}) - \nabla \F(\w_k) + \nabla \f(\w_k) 
+ \g_k - \nabla \f(\w_k) + \nabla \f(\w_{k+1}) + \frac{1}{\eta}(\w_{k+1} - \w_k).
\end{equation}
So \eqref{eq:relaxed-triangle-inequality} with 
\begin{align*}
a &= \nabla \f(\w_{k+1}) \\
b &= -\nabla \phi_k(\w_{k+1}) \\
c &= \nabla \F(\w_{k+1}) - \nabla \f(\w_{k+1}) - \nabla \F(\w_k) + \nabla \f(\w_k) \\
d &= \g_k - \nabla \f(\w_k) 
\end{align*}
so $a+b+c+d = \frac{1}{\eta}(\w_k - \w_{k+1})$, implies that under Assumptions \ref{ass:similarity} and \ref{ass:noise}
\begin{align*}
&\frac{1}{\eta^2}\nrm{\w - \w_k}^2\nonumber\\
&\geq 
\frac{1}{4}\E\nrm*{\nabla \f(\w)}^2 
- \E\nrm{\nabla \phi_k(\w)}^2 - \E\nrm{\g_k - \nabla \f(\w_k)}^2 - \E\nrm{\nabla \F(\w) - \nabla \f(\w) - \nabla \F(\w_k) + \nabla \f(\w_k)}^2 \\
&\geq 
\frac{1}{4}\E\nrm*{\nabla \f(\w)}^2 
- \E\nrm{\nabla \phi_k(\w)}^2 - \delta^2\E\nrm{\w-\w_k}^2 
- \sigma^2 \\
&\geq 
\frac{1}{8}\E\nrm*{\nabla \f(\w)}^2 
- \frac{1}{2\eta^2}\E\nrm{\w-\w_k}^2 
- \sigma^2,
\end{align*}
where we used the upper bound on $\E\nrm{\nabla \phi_k(\w)}^2$ and the fact that $\eta \leq \frac{1}{4\delta}$. Rearranging completes the proof.
\end{proof}

\nonconvex*
\begin{proof}
By Lemmas \ref{lem:non-convex-function-decrease} and \ref{lem:non-convex-distance-lower-bound}
\begin{equation}
\E\brk*{\f(\w_{k+1}) - \f(\w_k)}
\leq -\frac{1}{4\eta}\E\nrm*{\w_{k+1} - \w_k}^2 + 2\eta\sigma^2 
\leq -\frac{\eta}{48}\E\nrm*{\nabla \f(\w_{k+1})}^2 + \frac{\eta\sigma^2}{6}.
\end{equation}
Therefore, rearranging and averaging over $k$ yields
\begin{equation}
\frac{1}{K}\sum_{k=1}^K \E\nrm{\nabla \f(\w_k)}^2
\leq \frac{48\prn*{\f(\w_0) - \f(\w_K)}}{\eta K} + 8\sigma^2
\leq \frac{48\prn*{\f(\w_0) - \f^*}}{\eta K} + 8\sigma^2
\end{equation}
which completes the proof.
\end{proof}

\section{Proof of Proposition \ref{prop:making-gradient-small}}\label{app:proof-of-makinggradientsmall}
\makinggradientsmall*
\begin{proof}
Since $\phi_k(\w) = \F(\w) + \frac{1}{2\eta}\nrm{\w-\w_k}^2$ up to an additional affine term, it is easy to see that $\phi_k$ is $\frac{1}{\eta}$-strongly convex and $(H+\frac{1}{\eta})$-smooth. Therefore, for any $\w$
\begin{equation}
\nrm{\nabla \phi_k(\w)}^2 \leq 2\prn*{H + \frac{1}{\eta}}\prn{\phi_k(\w) - \phi_k(\w^*_{\phi})},
\end{equation}
where $\w^*_{\phi} = \argmin_\w \phi_k(\w)$.
In addition, the output, $\hat{\w}$, of \textsc{SGD} on $\phi_k$ using the optimal stepsize and initialized at $\w_k$ is, for universal constants $c,c' \geq 1$ \citep{nemirovskyyudin1983}
\begin{equation}
\E\brk{\phi_k(\hat{\w}) - \phi_k(\w^*_{\phi})} 
\leq c\prn*{H+\frac{1}{\eta}}\nrm{\w_k - \w^*_\phi}^2\exp\prn*{-\frac{T}{c'(1+H\eta)}} 
+ \frac{c\eta\rho^2}{T}.
\end{equation}
So, by the relaxed triangle inequality and the strong convexity of $\phi_k$
\begin{align}
\nrm*{\w_k - \w^*_\phi}^2
&\leq 2\nrm*{\hat{\w} - \w_k}^2 + 2\nrm*{\hat{\w} - \w^*_\phi}^2 \\
&\leq 2\nrm*{\hat{\w} - \w_k}^2 + 4\eta\prn*{\phi_k(\hat{\w}) - \phi_k(\w^*_\phi)} \\
&\leq 2\nrm*{\hat{\w} - \w_k}^2 + \frac{4c\eta^2\rho^2}{T} + 4c\prn*{1+H\eta}\exp\prn*{-\frac{T}{c'(1+H\eta)}}\nrm{\w_k - \w^*_\phi}^2.
\end{align}
Therefore, if 
\begin{equation}\label{eq:big-enough-T}
T \geq c'(1+H\eta)\ln\prn*{8c(1+H\eta)}
\end{equation}
then
\begin{equation}
\nrm*{\w_k - \w^*_\phi}^2 \leq 4\nrm*{\hat{\w} - \w_k}^2 + \frac{8c\eta^2\rho^2}{T},
\end{equation}
and thus
\begin{align}
\nrm{\nabla \phi_k(\hat{\w})}^2 
&\leq 2\prn*{H + \frac{1}{\eta}}\prn{\phi_k(\hat{\w}) - \phi_k(\w^*_{\phi})} \\
&\leq 2c\prn*{H+\frac{1}{\eta}}^2\exp\prn*{-\frac{T}{c'(1+H\eta)}}\nrm{\w_k - \w^*_\phi}^2 + \frac{2c(1+H\eta)\rho^2}{T} \\
&\leq 2c\prn*{H+\frac{1}{\eta}}^2\exp\prn*{-\frac{T}{c'(1+H\eta)}}\prn*{4\nrm{\hat{\w} - \w_k}^2 + \frac{8c\eta^2\rho^2}{T}} + \frac{2c(1+H\eta)\rho^2}{T} \\
&\leq 8c\prn*{H+\frac{1}{\eta}}^2\exp\prn*{-\frac{T}{c'(1+H\eta)}}\nrm{\hat{\w} - \w_k}^2 + \frac{4c(1+H\eta)\rho^2}{T} .
\end{align}
Therefore, choosing
\begin{equation}\label{eq:choice-of-T}
T \geq \max\bigg\{c'(1+H\eta)\ln\prn*{\frac{32c\prn*{1+H\eta}^2}{\mu\eta}}, 
\frac{4c(1+H\eta)\rho^2}{G^2} \bigg\}
\end{equation}
ensures that
\begin{equation}
\nrm{\nabla \phi_k(\hat{\w})}^2 \leq \frac{\mu}{4\eta}\nrm{\hat{\w} - \w_k}^2 + G^2.
\end{equation}
Finally, we note that our choice of $T$ \eqref{eq:choice-of-T} satisfies the condition \eqref{eq:big-enough-T} because $\F$ being $H$ smooth implies that $\f$ is $(H+\delta)$-smooth and therefore
\begin{equation*}
\mu \leq H+\delta \leq H + \frac{1}{\eta} \implies \frac{1+H\eta}{\eta\mu} \geq 1.
\end{equation*}
\end{proof}

% \begin{algorithm}[t]
% \caption{\textsc{ProxyProx-SGD} (an implementation of Algorithm~\ref{alg:ISMD})}
% \label{alg:pps}
% \begin{algorithmic}[1]
% \STATE \textbf{Input:} initialization $\w_0\in \R^{d}$, stepsizes $\eta, \gamma >0$, number of inner-loop iterations $T$
%     \STATE $v_0 = \nabla f(x_0)$
%     \FOR{$k = 0,1,2,\dots$}
%     \STATE Sample $\g_k$ such that $\E[\g_k\mid \w_k]=\nabla L(\w_k)$
%     \STATE Compute $\nabla\hat L(\w_k)$
%     \STATE $\hat \w_{k,0} = \w_k$
%         \FOR{$t=0,1,\dotsc, T-1$}
%             \STATE Sample $\hat \g_{k, t}$ such that $\E[\hat \g_{k, t}\mid \hat \w_{k, t}] = \nabla \hat L(\hat \w_{k, t})$
%             \STATE $\hat\w_{k, t+1} = \hat \w_{k, t} - \gamma (\hat \g_{k, t} - \nabla \hat L(\w_k) + \g_k ) - \frac{\gamma}{\eta}(\hat \w_{k, t} - \w_k)$
%         \ENDFOR
%         \STATE $\w_{k+1} = \hat \w_{k, T}$
%     \ENDFOR
% \end{algorithmic}	
% \end{algorithm}

\end{document}